\newtheorem*{theorem*}{Theorem}
\newtheorem*{proposition*}{Proposition}
\newtheorem{theorem}{Theorem}[section]
\newtheorem{proposition}[theorem]{Proposition}
\theoremstyle{definition}
\newtheorem{definition}[theorem]{Definition}
\renewcommand{\S}{Section }
\title{Quantifying Memory Use in Reinforcement Learning with Temporal Range}
\author{Rodney Lafuente-Mercado \\
MIT Lincoln Laboratory \\
\texttt{Rodney.LafuenteMercado@ll.mit.edu} \\
\And
Daniela Rus \\
CSAIL, MIT \\
\texttt{rus@csail.mit.edu} \\
\And
T. Konstantin Rusch \\
ELLIS Institute T\"ubingen \&\\
Max Planck Institute for Intelligent Systems \&\\
T\"ubingen AI Center \\
\texttt{tkrusch@tue.ellis.eu}
}
\begin{document}

\maketitle
\begin{abstract}
How much does a trained RL policy actually use its past observations? We propose
\emph{Temporal Range}, a model-agnostic metric that treats first-order
sensitivities of multiple vector outputs across a temporal
window to the input sequence as a temporal influence profile and summarizes it
by the magnitude-weighted average lag. Temporal Range is computed via
reverse-mode automatic differentiation from the Jacobian blocks
$\partial y_s/\partial x_t\in\mathbb{R}^{c\times d}$ averaged
over final timesteps $s\in\{t+1,\dots,T\}$ and is well-characterized in the
linear setting by a small set of natural axioms. Across diagnostic and control
tasks (POPGym; flicker/occlusion; Copy-$k$) and architectures (MLPs, RNNs,
SSMs), Temporal Range (i) remains small in fully observed control, (ii) scales
with the task's ground-truth lag in Copy-$k$, and (iii) aligns with the minimum
history window required for near-optimal return as confirmed by window
ablations. We also report Temporal Range for a compact Long Expressive Memory
(LEM) policy trained on the task, using it as a proxy readout of task-level
memory. Our axiomatic treatment draws on recent work on range measures,
specialized here to temporal lag and extended to vector-valued outputs in the RL
setting. Temporal Range thus offers a practical per-sequence readout of memory
dependence for comparing agents and environments and for selecting the shortest
sufficient context.
\end{abstract}

\section{Introduction}
Reinforcement learning (RL) has a long-standing history of utilizing memory to
improve performance in complex environments
\citep{hausknecht2015deep,berner2019dota,chen2021decision,lu_structured_2023}.
Examples of machine learning models that incorporate memory include classical
Recurrent Neural Networks (RNNs) such as Long Short-Term Memory (LSTM) models
\citep{hochreiter1997long}, Transformer \citep{vaswani2017attention}, and
recently State-Space Models (SSMs) \citep{gu2021efficiently,gu2023mamba}.
However, a rigorous analysis and quantitative measure of the extent to which a
\emph{trained} policy utilizes historical information remains largely absent.
This matters in partially observed settings: if effective history dependence is
short, simpler architectures or shorter contexts suffice; if it is long, we
should see it directly in the learned policy rather than infer it from task
design or model choice. Current practice relies on indirect signals, such as
model class comparisons, environment-specific probes, or sample-complexity
bounds \citep{williams_reinforcement_2009, efroni_provable_2022,
morad_popgym_2023}. These conflate optimization, inductive bias, and true memory
demand, and they do not yield a comparable, sequence-level number.

To address this, we formalize a \emph{Temporal Range} metric that aggregates
vector-output Jacobians over lags into a magnitude-weighted average look-back,
axiomatized for uniqueness. Concretely, for each position $t$
we average Jacobian norms $\|J_{s,t}\|_{\text{mat}}$ over all subsequent final
timesteps $s\in\{t+1,\dots,T\}$, forming per-step weights
$w_t=\frac{1}{T-t}\sum_{s=t+1}^{T}\|J_{s,t}\|_{\text{mat}}$,
and report
\[
\hat{\rho}_T \;=\; \frac{\sum_{t=1}^{T} w_t\,(T-t)}{\sum_{t=1}^{T} w_t}\;\in[0, T{-}1], \qquad \text{with } \sum_{t=1}^{T} w_t>0.
\]
Thus $\hat{\rho}_T$ answers ``how far back is this policy looking \emph{here}?''
at the level of a specific rollout and timestep. Our theoretical starting point
is the axioms of range from \citet{bamberger_measuring_2025}; we specialize them
to temporal lag, extend to vector-valued outputs, and study their consequences
for reinforcement learning agents.

For vector-output \emph{linear} maps we give a short axiomatic derivation that
fixes both the unnormalized and normalized forms: single-step calibration,
additivity over disjoint time indices (or magnitude-weighted averaging), and
absolute homogeneity identify the unique matrix-norm–weighted lag sums/averages.
The same formulas applied to the local linearization yield our nonlinear policy
metric. The normalized form is invariant to uniform input rescaling (change of
units) and uniform output rescaling, making cross-agent and cross-environment
comparisons straightforward.

Computationally, Temporal Range is inexpensive. The required Jacobian blocks are
\emph{policy} derivatives with respect to observation inputs and can be obtained
with standard reverse-mode automatic differentiation on the policy alone. When
direct auto-differentiation on a given model is impractical, we train a compact
LEM policy on the same task and compute the same quantities on this proxy.

We validate Temporal Range across POPGym diagnostics and classic control with
Multi-Layer Perceptrons (MLPs), gated RNNs (e.g., LSTMs, Gated Recurrent Units
(GRUs) \citep{chung2014empirical}, and LEM), as well as SSMs. The metric (i)
stays near zero in fully observed control, (ii) scales with the ground-truth
offset in Copy-$k$, and (iii) aligns with the smallest history window needed to
achieve near-optimal return, as verified by window ablations that rebuild hidden
state from truncated histories. These results make Temporal Range a practical
tool for auditing memory use and for selecting the shortest sufficient context.

\paragraph{Contributions.}
(i) We formalize a simple, model-agnostic metric, Temporal Range, that
aggregates per-timestep \emph{matrix}-norm Jacobians of a \emph{vector} output
into a magnitude-weighted average lag, computable in one reverse-mode pass. This
captures temporal dependence and is complementary to existing saliency and
perturbation-style explanations, which focus on spatial attention within single
frames. (ii) A minimal axiomatic justification for vector-output linear maps
fixing both unnormalized and normalized forms, with invariance to uniform input
and output rescaling. (iii) Empirical validation on diagnostics and control
showing agreement with ground truth and with window-size requirements for high
return. (iv) A LEM-based proxy enabling use when the policy blocks gradients or
is black-box.

\section{Related Work}
Research on memory in RL has often focused on \emph{implementing} mechanisms
(RNNs, Transformers, memory buffers), but fewer works directly \emph{measure}
how much past information agents actually use. We review theoretical foundations
and empirical approaches to quantifying memory dependence.

\noindent\textbf{Foundational context.} Learning long-range dependencies is
nontrivial due to exploding/vanishing gradients and bias from truncated BPTT
\citep{pascanu2013difficulty,tallec2018recurrentneuralnetworkswarp}. Our focus
on history use is motivated by the classical view that partial observability
demands memory \citep{kaelbling_planning_1998} and by temporal credit-assignment
foundations in RL \citep{sutton_reinforcement_2018}. Computationally, our
reverse-mode computation of temporal influence ties directly to backpropagation
through time \citep{werbos_backpropagation_1990}. Empirically, our scalar
Temporal Range parallels “effective context length” observations in language
modeling \citep{khandelwal_sharp_2018}. Finally, since our metric aggregates
Jacobian magnitudes, we note standard cautions from the saliency literature
\citep{adebayo_sanity_2018}.

\subsection{Theoretical Frameworks}
Classical Markov decision processes (MDPs) assume the Markov property: future
states depend only on the present. Partially observed Markov decision processes
(POMDPs), however, are non-Markovian to the agent and thus require memory.
\citet{mizutani_totally_2017} show that recurrent architectures can render
non-Markovian processes Markovian in an augmented state space.
\citet{efroni_provable_2022} prove sample-complexity bounds for POMDPs where
latent states can be decoded from histories of length $m$, showing exponential
scaling in $m$. Other work links partial observability and memory through
Bayesian/active inference perspectives \citep{malekzadeh_active_2024}, and
extends to decentralized multi-agent settings
\citep{omidshafiei_decentralized_2017}.

\subsection{Empirical Measurement Approaches}
\textbf{Model comparisons.} POPGym \citep{morad_popgym_2023} provides partially
observable tasks for comparing memory-augmented models.

\textbf{Environment-specific studies.} Modified CartPole
\citep{koffi_novel_2020}, Pacman \citep{fallasmoya_measuring_2021}, and
multi-robot delivery \citep{omidshafiei_decentralized_2017} isolate memory
requirements in specific tasks.

\textbf{Performance gaps.} Studies such as \citet{meng_memory_2021} compare
memoryless Twin Delayed Deep Deterministic Policy Gradient (TD3) vs. recurrent
(LSTM-TD3) agents in MDPs vs. POMDPs, quantifying degradation without memory.
\citet{williams_reinforcement_2009} give conditions where stochastic memoryless
policies suffice.

\textbf{Robustness and complexity.} Metrics such as adversarial perturbation
robustness \citep{zhang_robust_2020} and theoretical sample-complexity bounds
\citep{efroni_provable_2022} indirectly quantify the cost of memory.

\paragraph{Gradient-based attribution and deep RL.} Our metric builds on the
classical saliency view that interprets Jacobian magnitudes, $|\partial
y/\partial x|$, as input importance \citep{simonyan_deep_2014}. In deep RL,
prior work uses such saliency/perturbation maps to show \emph{where} a vision
policy attends within a single frame \citep{greydanus_visualizing_2018}, but
these analyses are inherently \emph{spatial} and per-timestep and often note
that raw Jacobian maps can be visually uninformative. Gradient-based
attributions admit known caveats; complementary baselines include Integrated
Gradients, SmoothGrad, and influence-function–style analyses
\citep{sundararajan2017axiomaticattributiondeepnetworks,smilkov2017smoothgradremovingnoiseadding,koh2020understandingblackboxpredictionsinfluence,pruthi2020estimatingtrainingdatainfluence}.

\paragraph{Temporal Range versus saliency.}
By contrast, Temporal Range is explicitly \emph{temporal}: we aggregate
matrix-norm Jacobian blocks over past observations to produce a scalar that
quantifies \emph{how far back} a policy depends on history. Beyond attribution,
we supply (i) an axiomatic justification that fixes the form of the metric and
(ii) behavioral validation via window ablations aligning measured range with the
minimal history needed for high return. To our knowledge, no prior
saliency-based method offers a scalar, sequence-level readout of memory
dependence with these guarantees \citep{simonyan_deep_2014,
greydanus_visualizing_2018}.

\subsection{Benchmark Development}
Benchmarks provide standardized evaluation. POPGym \citep{morad_popgym_2023} is
widely used, while newer efforts like MIKASA \citep{cherepanov_memory_2025}
classify memory-intensive RL tasks such as robotic manipulation.

\section{A Temporal-Range Measure for Reinforcement Learning}
\label{sec:temporal_range}
How much does a policy at time $T$ look back in the past? Our goal is a
concrete, \emph{model-agnostic} number. The plan: treat first-order
sensitivities of the final \emph{vector} output with respect to earlier inputs
as a \emph{temporal influence profile}, and summarize that profile by the
expected look-back (average lag). We then show this summary is not arbitrary: in
the linear case, it is the \emph{unique} object that satisfies a small set of
first-principles requirements for any reasonable “how-far-back” score. For
nonlinear policies, we apply the same formula to the local linearization.

\subsection{Setup and definition}
Let $F:\mathbb{R}^{T\times d}\to\mathbb{R}^c$ denote the map
 from a length-$T$ observation sequence to a vector output, $X_{1:T}\mapsto
 y(X)$, where $X_{1:T}=[x_1,\dots,x_T]$ with $x_t\in\mathbb{R}^d$ and
 $y(X)\in\mathbb{R}^c$ is the policy's vector output (e.g., action logits or
 probabilities). For each pair $s,t$ with $1\le t < s\le T$,
 write the Jacobian block
\[
J_{s,t}(X)\;\coloneqq\;\frac{\partial y_s(X)}{\partial x_t}\in\mathbb{R}^{c\times d},
\]
where $y_s(X)$ is the output at step $s$ given the partial
sequence $X_{1:s}$. Fix a matrix norm $\|\!\cdot\!\|_{\text{mat}}$ on
$\mathbb{R}^{c\times d}$ (e.g., Frobenius, or an induced operator norm).
For a window of length $T$, define the per-step
\emph{influence weight} at position $t$ by aggregating the
Jacobian norms over all subsequent timesteps $s\in\{t+1,\dots,T\}$ via an operator $\bigoplus$:
\begin{align}
w_t(X) &\coloneqq \bigoplus_{s=t+1}^{T} \|J_{s,t}(X)\|_{\text{mat}}\;\;\;\ge 0.
\end{align}
The operator $\bigoplus$ is configurable: we use $\bigoplus = \text{mean}$ (i.e., $\frac{1}{T-t}\sum_{s=t+1}^{T}$) throughout this paper, as it provides better discrimination between tasks with different memory requirements. For tasks with concentrated temporal dependencies (e.g., a single critical observation affecting one future action), $\bigoplus = \max$ may be more appropriate; see Appendix~\ref{app:max-aggregation} for a comparison.
In practice, we compute $w_t$ by averaging over a calibration set of multiple rollouts to reduce variance.
The \emph{lag} is defined by
\begin{align}
\ell(t) &\coloneqq T-t\in\{1,\dots,T{-}1\}.
\end{align}

The map $t\mapsto w_t(X)$ can be interpreted as a nonnegative temporal influence
profile; we summarize it by the expected lag under these weights.

Note that throughout, we assume the vector outputs across the
window are not independent of the input sequence $X_{1:T}$ at the evaluation
point $X$, i.e., at least one Jacobian block $J_{s,t}(X)$ is nonzero for some
$s,t$. Equivalently, $\sum_{t=1}^{T} w_t(X) > 0$.

\begin{definition}[Temporal range]
\label{def:range}
For differentiable $F$ and an evaluation sequence $X$,
\begin{align}
\rho_T(F;X) &\;\coloneqq\;\sum_{t=1}^{T} w_t(X)\,\ell(t), \label{eq:unnormalized_range}\\[2pt]
\hat{\rho}_T(F;X) &\;\coloneqq\;
\dfrac{\sum_{t=1}^{T} w_t(X)\,\ell(t)}{\sum_{t=1}^{T} w_t(X)}. \label{eq:normalized_range}
\end{align}
\end{definition}

It follows directly that $0\le \hat{\rho}_T(F;X)\le T{-}1$. The unnormalized
$\rho_T$ aggregates both \emph{how far back} and \emph{how strongly} the past
matters; the normalized $\hat{\rho}_T$ reports an average look-back in steps.

\subsection{Axiomatic basis}
\label{sec:temporal_range_axioms}

We introduce \emph{Temporal Range} and give a short axiomatic justification
tailored to time and vector outputs. Our axioms follow the style of prior
``range'' functionals but are adapted to temporal lag $\ell(t)=T{-}t$ and matrix
norms on vector outputs, yielding in the \emph{linear} case
$L(z_1,\dots,z_T)=\sum_{t=1}^{T} B_t z_t$ with $B_t\!\in\!\mathbb{R}^{c\times
d}$ the unique forms:
\begin{equation}
\rho_T(L)=\sum_{t=1}^{T}\|B_t\|_{\text{mat}}\,(T{-}t),
\qquad
\hat\rho_T(L)=
\frac{\sum_{t=1}^{T}\|B_t\|_{\text{mat}}\,(T{-}t)}
     {\sum_{t=1}^{T}\|B_t\|_{\text{mat}}}.
\label{eq:rho-linear-forms}
\end{equation}
We then use these linear forms at the local linearization of a nonlinear policy.
Full axioms (our temporal/vector-output variant) and uniqueness proofs appear in
App.~\ref{app:axioms-uniqueness}. See \citet{bamberger_measuring_2025} for
related axiomatization style.

\subsection{From linear maps to policies}
\label{sec:linear-to-policy}

Given a differentiable policy that produces vector outputs
$y_s(X)\in\mathbb{R}^c$ at each step $s$ in a window of length $T$, define the
Jacobian blocks and their averaged matrix–norm magnitudes
\begin{equation}
\label{eq:policy-jac}
J_{s,t}(X)\;=\;\frac{\partial y_s}{\partial x_t}(X)\in\mathbb{R}^{c\times d},
\qquad
w_t(X)\;=\;\frac{1}{T-t}\sum_{s=t+1}^{T}\bigl\|J_{s,t}(X)\bigr\|_{\text{mat}}.
\end{equation}
Plugging these weights into \eqref{eq:rho-linear-forms} yields
\begin{equation}
\label{eq:policy-range}
\begin{aligned}
\rho_T(F;X) \;&=\; \sum_{t=1}^{T} w_t(X)\,(T-t),\\
\hat{\rho}_T(F;X) \;&=\; \frac{\sum_{t=1}^{T} w_t(X)\,(T-t)}{\sum_{t=1}^{T} w_t(X)}\,,
\end{aligned}
\end{equation}
the magnitude-weighted average look-back of the local linearization at $X$.

\subsection{Invariances and scope}
\paragraph{Uniform output rescaling.}
If $\tilde y_s=\alpha\,y_s$ for all $s$ with $\alpha\neq 0$,
then $J_{s,t}(\tilde y)=\alpha\,J_{s,t}(y)$ for all $s,t$, so every term in the
averaging sum $w_t=\frac{1}{T-t}\sum_{s=t+1}^{T}\|J_{s,t}\|_{\text{mat}}$ is
multiplied by $|\alpha|$, leaving $\hat{\rho}_T$ unchanged (while $\rho_T$
rescales by $|\alpha|$). This covers, e.g., temperature scaling of logits.

\paragraph{Uniform input rescaling (change of units).}
If we change units by $x_t^\star=\beta x_t$ with $\beta\neq 0$,
then by the chain rule $\|\frac{\partial y_s}{\partial
x_t^\star}(X)\|_{\text{mat}} =\frac{1}{|\beta|}\,\|\frac{\partial y_s}{\partial
x_t}(X)\|_{\text{mat}}$ for all $s,t$, so every term in $w_t$ is multiplied by
$1/|\beta|$, leaving $\hat{\rho}_T$ invariant (while $\rho_T$ rescales by
$1/|\beta|$). This concerns a \emph{reparameterization of inputs}; feeding
$\beta X$ into a fixed network changes $F$ and may change $\hat{\rho}_T$.



\subsection{Immediate RL consequences}
\label{sec:rl-consequences}

Temporal Range serves primarily as an interpretability tool: it
quantifies how far back a trained policy looks when making decisions. In
partially observed or noisy settings, earlier inputs matter for state
reconstruction and smoothing, so $\hat{\rho}_T$ grows. Comparing architectures,
short ranges on tasks believed to need memory may indicate under-capacity; long
ranges on near-Markov tasks indicate unnecessary slow modes (common with some
SSMs). The measure also provides an upper bound for sizing history windows:
aggregating $\hat{\rho}_T$ across evaluation episodes suggests a conservative
context length, which window ablations can then validate.

Temporal Range is a local, per-sequence diagnostic: at a given rollout
window it asks how far back the policy is effectively looking.
In fully observed control, if each $y_s$ depends only on
$x_s$, then $J_{s,t}(X)=0$ for all $t<s$, so $w_t=0$ and
$\hat{\rho}_T=0$ at that point. With a finite effective memory $(x_{s-m+1{:}s})$,
$J_{s,t}=0$ for $t\le s-m$ across all $s$ and
$\hat{\rho}_T\in[0,m{-}1]$; for a single-offset dependence (Copy-$k$),
$\hat{\rho}_T=k$ exactly. These cases calibrate scale and serve as unit tests.

Policy and value heads can differ materially. Applying the same computation to
the value head $V_T$ often yields a larger effective range, reflecting return
propagation; a gap between policy and value ranges helps explain unstable
advantage estimates and motivates retuning temporal credit assignment (e.g., the
$\lambda$ in Generalized Advantage Estimation (GAE)
\citep{schulman2016highdimensionalcontinuouscontrolusing}). Importantly, uniform
output rescaling (e.g., logits temperature) multiplies all $w_t$ by the same
factor and leaves the normalized range invariant, so exploration via temperature
does not confound comparisons of $\hat{\rho}_T$.

\subsection{Analytical calibration in toy differentiable settings}
\label{sec:analytical_toy_blurb}
When the end-to-end map happens to be fully differentiable and structured,
Temporal Range admits closed-form expressions that calibrate the measure. In
App.~\ref{sec:analyticalTR} we work out two cases: (\textit{i})
\textsc{Copy-$k$}, where $\hat{\rho}_T=k$ exactly for any matrix norm, and
(\textit{ii}) linear recurrent readouts, where $w_t=\|Q A^{T-t}
C\|_{\text{mat}}$ yields a profile governed by the spectrum of $A$. These
derivations are useful for calibration and intuition. In practical RL
environments, we compute Jacobians via reverse-mode \emph{on the policy}. If the
target model is not amenable to automatic differentiation, we compute the same
quantities on a compact LEM proxy (Sec.~\ref{sec:lem}).

\section{Approximating Temporal Range with LEM}
\label{sec:lem}

Computing Temporal Range requires differentiating the policy outputs
$y_s$ for $s\in\{1,\dots,T\}$ with respect to past
\emph{observations} $x_{1:T}$. This does not require differentiating the
simulator. In settings where the policy itself is non-differentiable or
inaccessible to autograd, we train a compact LEM policy on the same task and
compute TR on this proxy. We use Long Expressive Memory (LEM)
\citep{rusch_lem_2022} as an effective proxy thanks to its stable long-horizon
gradients. LEM is designed for long-horizon sequence modeling and maintains
stable gradients over extended contexts via a multiscale ordinary differential
equation (ODE) formulation.

\paragraph{Empirical check.}  
In benchmarks, proxy $\hat{\rho}_T$ varies in line with known task structure: it
increases with $k$ in Copy-$k$ and grows under partial observability, indicating
that the proxy yields reliable Jacobians when the original model is not amenable
to differentiation.

\section{Experiments}
\label{sec:experiments}

We quantify how much trained agents use history by measuring \emph{Temporal
Range} $\hat{\rho}_T$ and validating it with \emph{window ablations}. We
evaluate four architectures, namely LEM, GRU, LSTM, and Linear Oscillatory
State-Space models (LinOSS) \citep{rusch2025linoss}, across diagnostic and
control settings
(Tables~\ref{tab:temporal_range_meanstd}--\ref{tab:performance_norm}).

\subsection{Setups}
\textbf{Training.}
All policies are trained with Proximal Policy Optimization (PPO)
(\citep{schulman_proximal_2017}) for $10^7$ steps. Unless stated, the
actor--critic uses a single LEM cell (size 128) with 64-d encoder/decoder MLPs;
GRU/LSTM and LinOSS are dimension-matched. We use the JAX auto-differentiation
framework \citep{jax2018github}.

\textbf{Temporal-Range computation.}
At evaluation, for each rollout of length $T$ we compute Jacobian blocks
$J_{s,t}=\partial y_s/\partial x_t\in\mathbb{R}^{c\times d}$
for all pairs $s,t$ with $t<s\le T$, where $y_s$ is the \emph{action-vector}
output at step $s$. We then average these norms over final timesteps to form
per-step weights $w_t=\frac{1}{T-t}\sum_{s=t+1}^{T}\|J_{s,t}\|_{\text{mat}}$
and aggregate into $\hat{\rho}_T$ (Def.~\ref{def:range}). Unless otherwise
noted, we use the Frobenius norm. We further average over
multiple rollouts (calibration set) and report means and standard deviations
over episodes.

\textbf{Window ablations (behavioral validation).}
To test whether measured look-back is functionally required, we evaluate each
trained policy under truncated histories of size $m\in\{1,2,4,8,16,32,64\}$: at
every step, we rebuild hidden state from the last $m$ observations only. Curves
(return vs.\ $m$) and the summary (Best@m, Avg.) are given in
Fig.~\ref{fig:ablations} and Table~\ref{tab:window_ablation_summary}.

\subsection{Environments}
\label{sec:envs}

We evaluate agents on a small suite of POPGym (\citep{morad_popgym_2023})
environments designed to probe distinct memory behaviors:

\begin{itemize}[leftmargin=1.25em,itemsep=2pt,topsep=2pt]

\item \textbf{Control (fully vs.\ partially observed).} CartPole is nearly
Markov and should require little history. In \emph{Stateless CartPole},
positions are hidden, so the agent must integrate observations. The \emph{Noisy
Stateless} variant adds observation noise, increasing smoothing demand.

\item \textbf{Diagnostics (known lags).} \emph{Repeat First} asks the agent to
recall an early value later in the episode. \emph{Copy-$k$} (with
$k\in\{1,3,5,10\}$) requires outputting the observation from exactly $k$ steps
ago, providing ground-truth offsets.

\end{itemize}

\section{Results}
\label{sec:results}

We now present empirical results. Tables report summary statistics across
environments and architectures, while plots pair reward traces with temporal
influence profiles to visualize both performance and measured look-back. Unless
otherwise noted, results use LEM as the representative model; the appendix
contains full sweeps and additional figures.

\subsection{Summary across settings}
Figure~\ref{fig:ablations} (ablations) and Figure~\ref{fig:profiles} (profiles)
show the same tasks side by side. On \emph{CartPole} (near-Markov), effective
look-back is short (GRU ${\sim}1$, LEM ${\sim}3$;
Table~\ref{tab:temporal_range_meanstd}), and returns saturate at small windows,
matching the compact profiles. In \emph{Stateless CartPole}, profiles develop
longer tails and ablations recover only with larger windows. In \emph{Copy-$k$},
profiles shift right with $k$ and ablations exhibit knees near
$\hat{\rho}_T{+}1$, while calibration MAE (Table~\ref{tab:copyk_mae}) is lowest
for GRU at higher $k$, with LEM/LinOSS tending to overestimate due to
slow/multiscale modes.

\begin{figure}[t]
  \centering

  \begin{subfigure}[t]{0.48\textwidth}
    \centering
    \includegraphics[width=\linewidth]{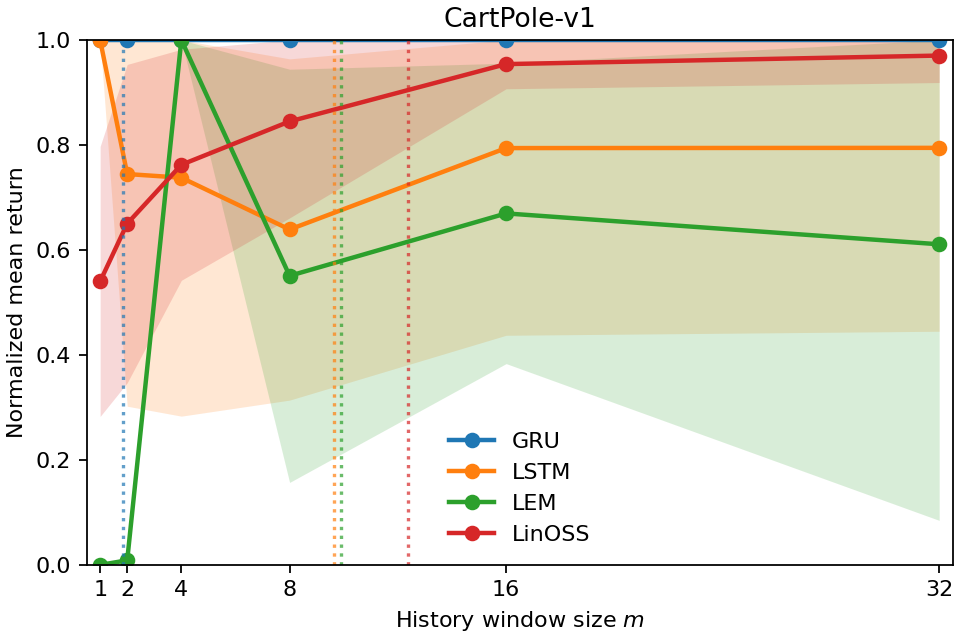}
    \caption{CartPole (near-Markov)}
    \label{fig:abl-cartpole}
  \end{subfigure}\hfill
  \begin{subfigure}[t]{0.48\textwidth}
    \centering
    \includegraphics[width=\linewidth]{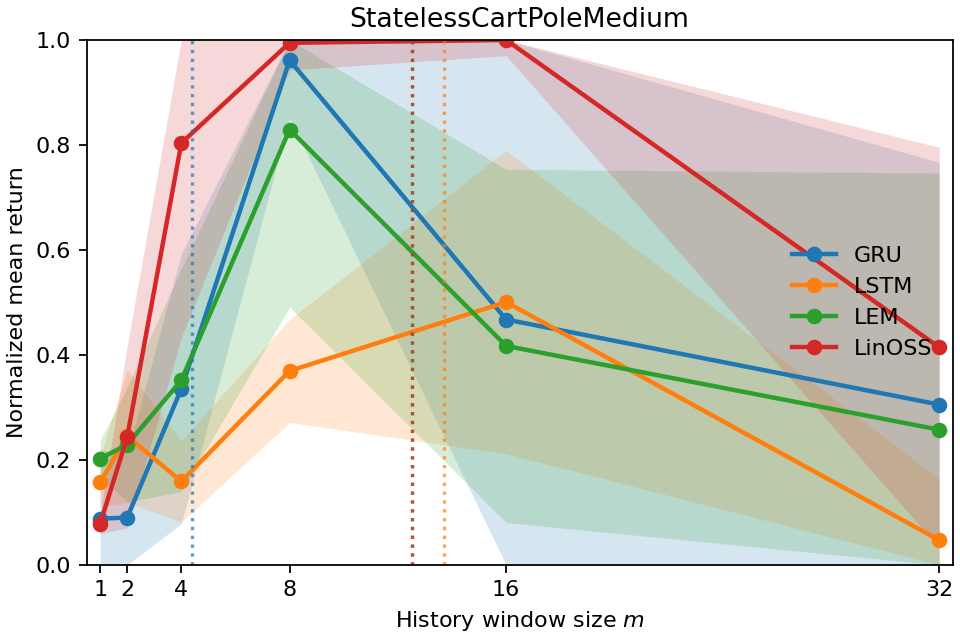}
    \caption{Stateless CartPole}
    \label{fig:abl-stateless}
  \end{subfigure}

  \medskip

  \begin{subfigure}[t]{0.48\textwidth}
    \centering
    \includegraphics[width=\linewidth]{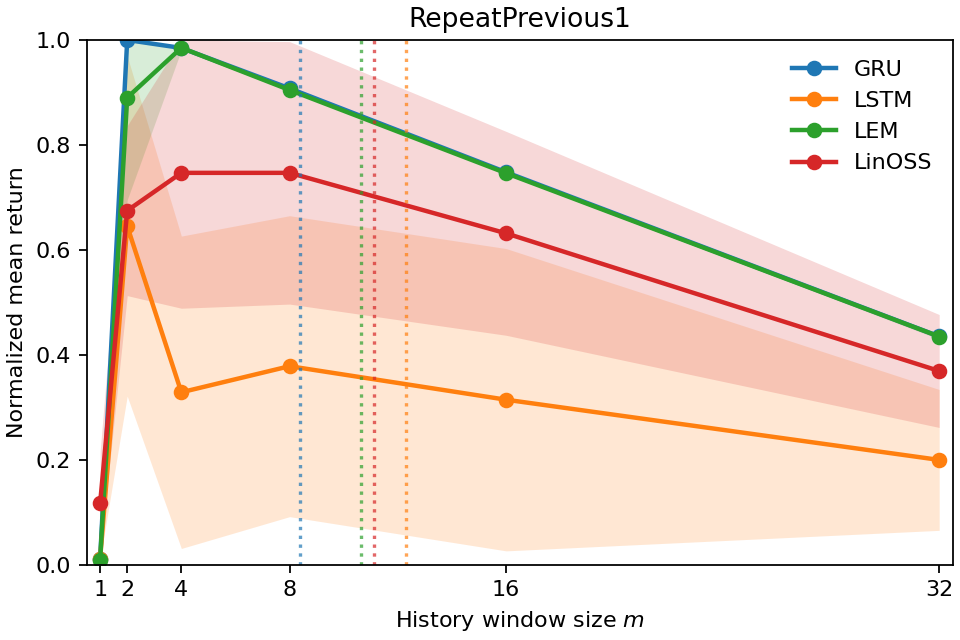}
    \caption{Copy $k=1$}
    \label{fig:abl-copy1}
  \end{subfigure}\hfill
  \begin{subfigure}[t]{0.48\textwidth}
    \centering
    \includegraphics[width=\linewidth]{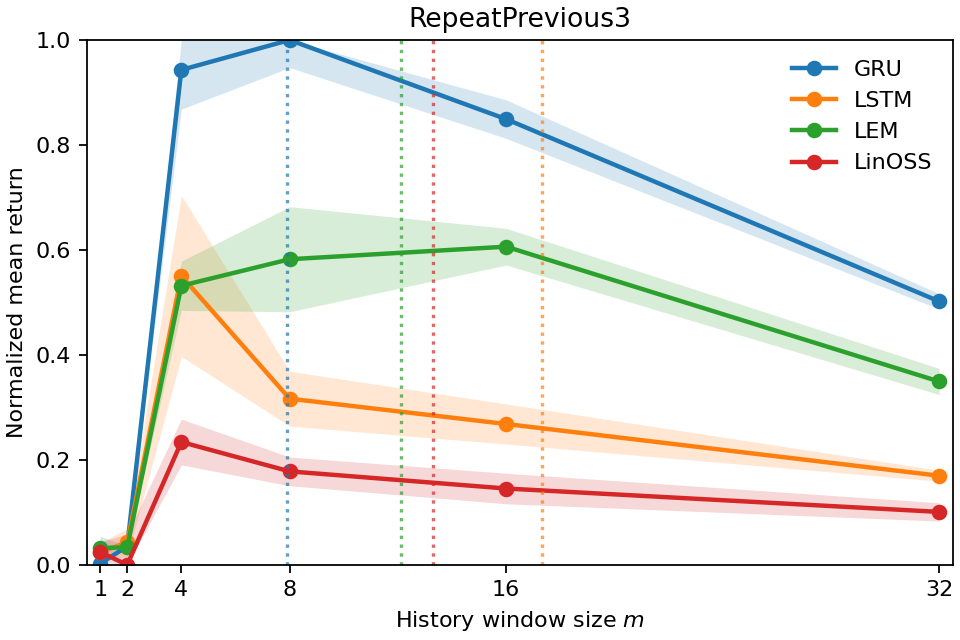}
    \caption{Copy $k=3$}
    \label{fig:abl-copy3}
  \end{subfigure}

  \medskip

  \begin{subfigure}[t]{0.48\textwidth}
    \centering
    \includegraphics[width=\linewidth]{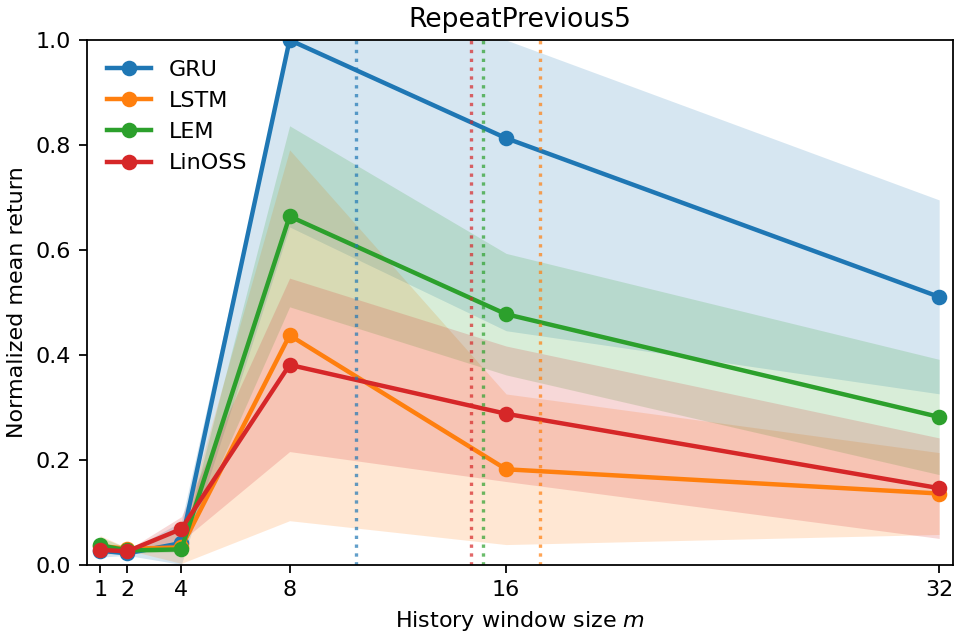}
    \caption{Copy $k=5$}
    \label{fig:abl-copy5}
  \end{subfigure}\hfill
  \begin{subfigure}[t]{0.48\textwidth}
    \centering
    \includegraphics[width=\linewidth]{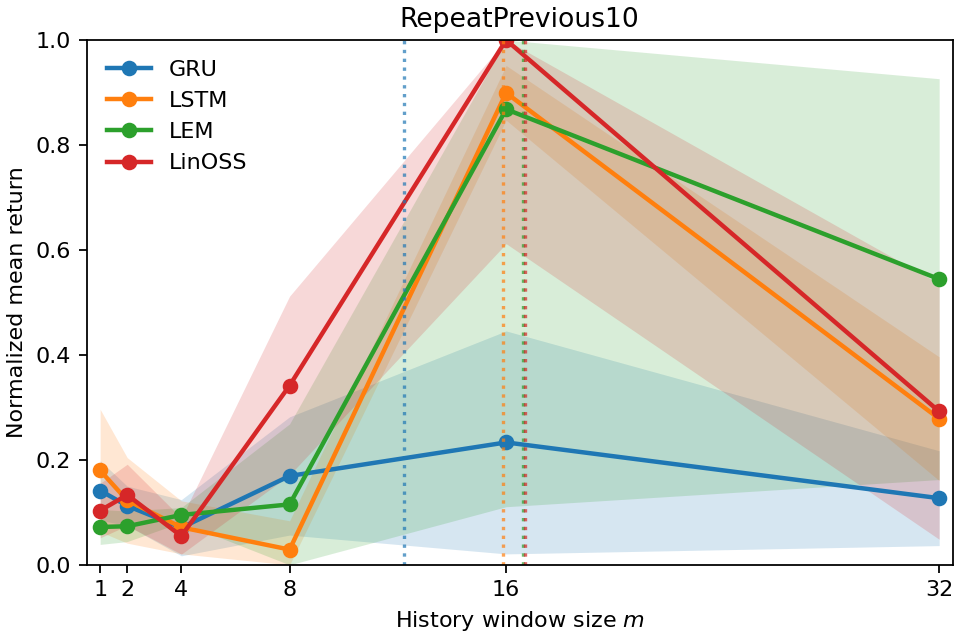}
    \caption{Copy $k=10$}
    \label{fig:abl-copy10}
  \end{subfigure}

  \caption{\textbf{Window ablations.} Normalized return vs.\
  context window size $m$ across architectures. Dotted vertical lines show
  $\hat{\rho}_T$ values. Performance recovers when $m$ exceeds temporal range,
  confirming TR identifies minimum sufficient context. Note that $\hat{\rho}_T$
  often aligns remarkably well with task requirements (e.g., GRU's $\hat{\rho}_T
  \approx 12$ for Copy $k=10$). When TR appears to fall short of the empirical
  peak (e.g., $\hat{\rho}_T=12$ while peak occurs at $m=16$), this is typically
  an artifact of our sparse window sampling ($m \in \{1,2,4,8,16,32\}$); the
  true performance peak likely lies between tested values, closer to the TR
  prediction.}
  \label{fig:ablations}
\end{figure}

\begin{figure}[t]
  \centering

  \begin{subfigure}[t]{0.48\textwidth}
    \centering
    \includegraphics[width=\linewidth]{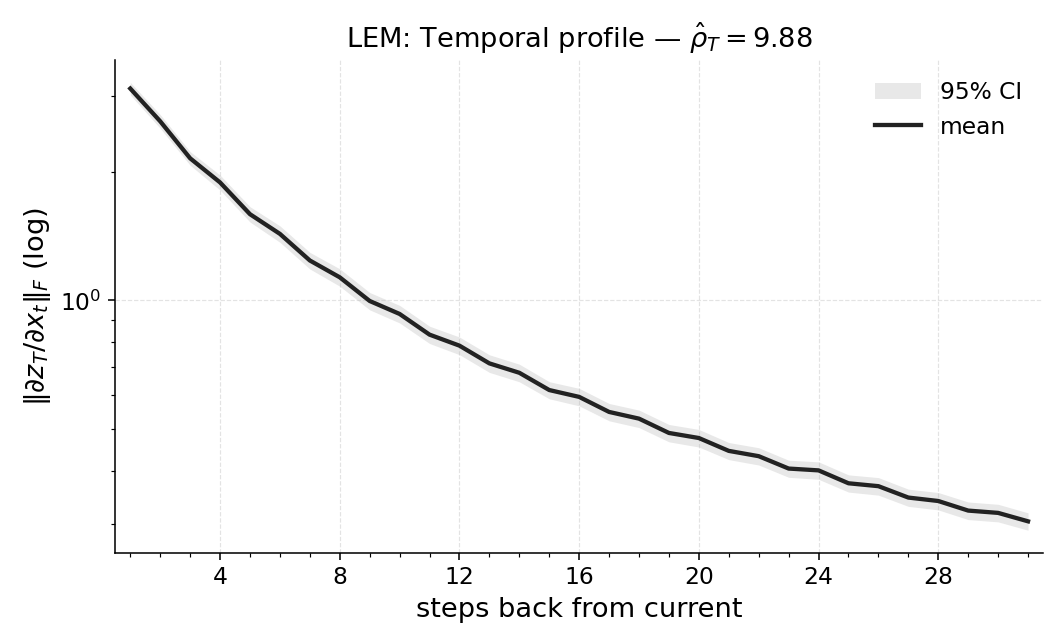}
    \caption{CartPole (profile)}
    \label{fig:prof-cartpole}
  \end{subfigure}\hfill
  \begin{subfigure}[t]{0.48\textwidth}
    \centering
    \includegraphics[width=\linewidth]{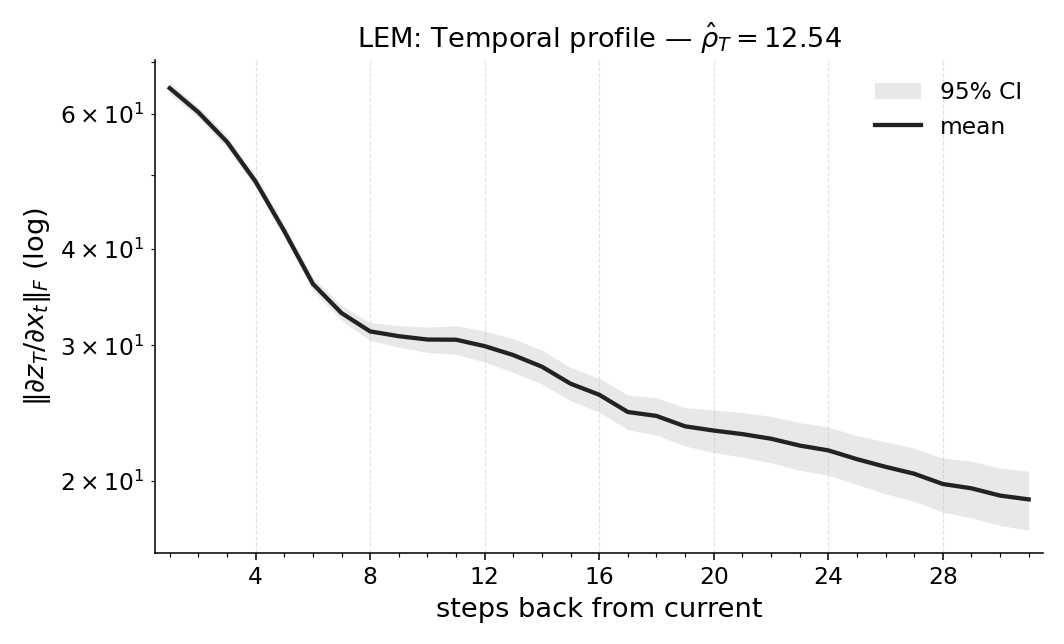}
    \caption{Stateless CartPole (profile)}
    \label{fig:prof-stateless}
  \end{subfigure}

  \medskip

  \begin{subfigure}[t]{0.48\textwidth}
    \centering
    \includegraphics[width=\linewidth]{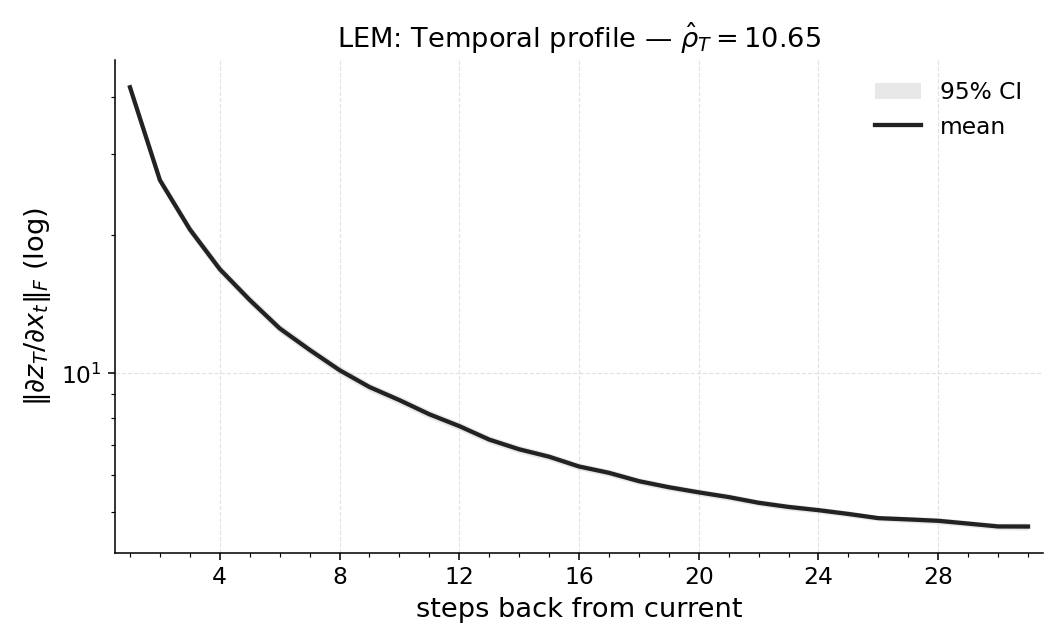}
    \caption{Copy $k=1$ (profile)}
    \label{fig:prof-copy1}
  \end{subfigure}\hfill
  \begin{subfigure}[t]{0.48\textwidth}
    \centering
    \includegraphics[width=\linewidth]{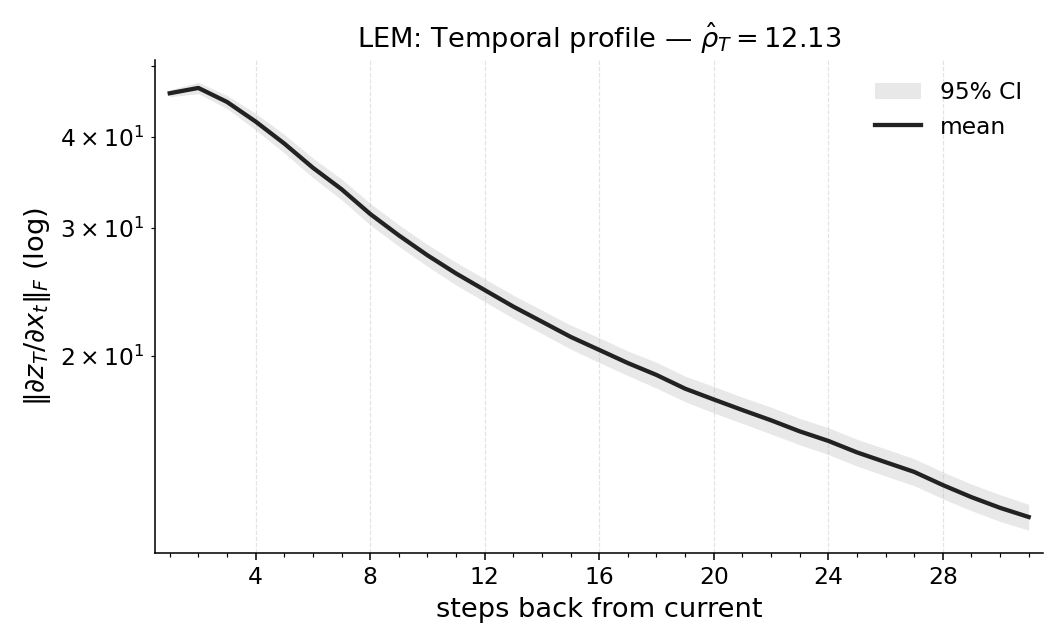}
    \caption{Copy $k=3$ (profile)}
    \label{fig:prof-copy3}
  \end{subfigure}

  \medskip

  \begin{subfigure}[t]{0.48\textwidth}
    \centering
    \includegraphics[width=\linewidth]{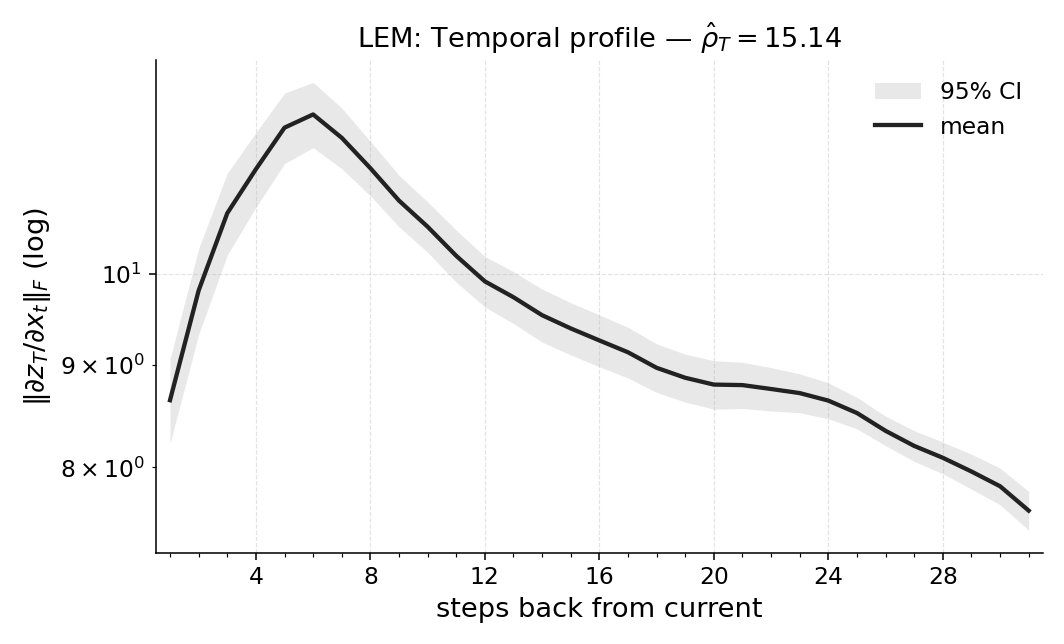}
    \caption{Copy $k=5$ (profile)}
    \label{fig:prof-copy5}
  \end{subfigure}\hfill
  \begin{subfigure}[t]{0.48\textwidth}
    \centering
    \includegraphics[width=\linewidth]{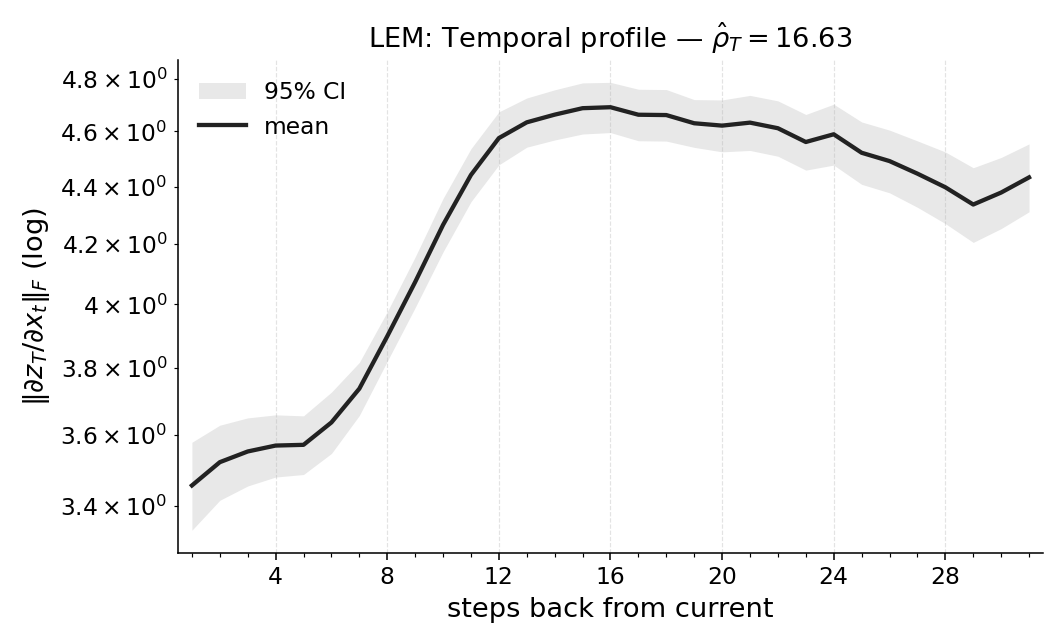}
    \caption{Copy $k=10$ (profile)}
    \label{fig:prof-copy10}
  \end{subfigure}

  \caption{\textbf{Temporal influence profiles (LEM).} Jacobian
  magnitude vs.\ steps back from current timestep. Profiles show how policy
  depends on observation history: CartPole concentrates on recent steps,
  Stateless CartPole distributes broadly, Copy-$k$ peaks near required lookback
  distance.}
  \label{fig:profiles}
\end{figure}

\subsection{Key observations}
Range tracks observability: it is small in \emph{CartPole} and increases in
\emph{Stateless CartPole}, with ablations recovering only once windows exceed
the measured look-back (Figs.~\ref{fig:ablations}, \ref{fig:profiles};
Tables~\ref{tab:temporal_range_meanstd}, \ref{tab:window_ablation_summary}). In
\emph{Copy-$k$}, $\hat{\rho}_T$ scales with $k$ and the ablation knee aligns
with ground truth (Tables~\ref{tab:temporal_range_meanstd},
\ref{tab:copyk_mae}). Notably, even in near-Markovian CartPole,
restricting context to only the current observation ($m=1$) causes performance
degradation, likely due to distribution shift; models trained with full history
adapt to using second-to-last observations, making sudden truncation
disruptive. Interestingly, TR reveals architecture-specific inefficiencies: GRU achieves $\hat{\rho}_T{\approx}2$ on CartPole, while LEM and LSTM hover around $10$, indicating they genuinely rely on longer history despite the task's near-Markovian nature. This highlights TR's value as an interpretability tool for exposing how different architectures use memory, even when such usage is unnecessary for performance. Across architectures, GRU tends to shorter effective memory unless
forced; LEM maintains multiscale tails with stable behavior; LinOSS often
inflates range via slow modes on easy tasks; LSTM is more seed-sensitive.

\subsection{TR-guided memory-efficient deployment}
\label{sec:tr-deployment}

To validate TR's practical utility for architecture design, we
test whether TR-recommended context windows enable memory-efficient deployment
while preserving performance. We follow a two-stage protocol: (1) train small
LEM models (hidden size 128) to compute TR estimates $\hat{\rho}_T$, (2) train
large LEM models (hidden size 512) with full continuous hidden state evolution.
We then evaluate the large models under three regimes: \emph{training}
(continuous evolution baseline), \emph{TR window} (hidden states rebuilt from
observation buffers of length $\lceil\hat{\rho}_T{+}1\rceil$), and
\emph{half-TR} (buffers of length $\lceil(\hat{\rho}_T{+}1)/2\rceil$). This
tests whether windowed evaluation approximates the model's natural operation
while reducing memory requirements.

Performance is normalized to $[0,1]$ per environment using
global min/max values from all training runs, with results averaged over
multiple evaluation trials.

\begin{table}[h]
\centering
\small
\caption{TR-guided window validation: normalized performance [0,1]
(mean $\pm$ std over evaluation trials). Training baseline shows final
performance with continuous hidden state evolution. Retention indicates windowed
performance relative to training baseline.}
\label{tab:tr_deployment}
\begin{tabular}{lccccc}
\toprule
Environment & TR & Training & TR window & Half-TR & Retention \\
\midrule
Noisy Stateless CartPole & $12.3$ & $0.474\pm0.017$ & $0.953\pm0.068$ &
$0.637\pm0.116$ & $201.1\%$ \\
Copy $k{=}3$ & $12.1$ & $0.934\pm0.007$ & $0.970\pm0.038$ & $0.838\pm0.021$ &
$103.9\%$ \\
Copy $k{=}10$ & $16.6$ & $0.561\pm0.018$ & $0.979\pm0.034$ & $0.052\pm0.027$ &
$174.1\%$ \\
\bottomrule
\end{tabular}
\end{table}

The results demonstrate that TR-recommended windows
successfully approximate training-regime performance despite the distribution
shift from periodic hidden state reconstruction. Models evaluated with TR-guided
truncated windows maintain high retention relative to continuous evolution
(103.9--201.1\%), validating that TR identifies sufficient context for the
windowed regime. In contrast, half-TR windows show substantial degradation,
confirming that truncating below TR recommendations loses critical temporal
information. These findings support the hypothesis that TR computed from small
models (h=128) provides actionable upper bounds for deploying large models
(h=512) with reduced memory. The ability to maintain performance while
reconstructing hidden states from windows of 10--20 steps (versus continuous
evolution over full episodes) demonstrates TR's utility for memory-constrained
deployment scenarios.

\begin{table}[t]
\centering
\small
\setlength{\tabcolsep}{7pt}
\caption{Temporal Range $\hat{\rho}_T$ (mean $\pm$ std steps).
Values increase with task memory requirements where policies are competent: moderate
for near-Markovian CartPole (with architecture-specific variation), higher for stateless variants, and scaling with $k$
in Copy tasks.}
\label{tab:temporal_range_meanstd}
\begin{tabular}{lcccc}
\toprule
Environment  & LEM & GRU & LSTM & LinOSS \\
\midrule
CartPole & $9.883\pm0.358$ & $1.854\pm1.842$ & $9.642\pm0.818$ &
$12.362\pm0.377$ \\
Stateless CartPole & $12.536\pm1.428$ & $4.391\pm1.000$ & $13.704\pm2.048$ &
$12.514\pm1.101$ \\
Noisy Stateless CartPole & $12.261\pm0.637$ & $9.762\pm1.388$ & $15.274\pm1.342$
& $14.151\pm0.488$ \\
RepeatFirst & $17.793\pm2.536$ & $3.742\pm2.504$ & $12.489\pm2.174$ &
$21.177\pm0.813$ \\
Copy $k=1$ & $10.647\pm0.424$ & $8.398\pm1.552$ & $12.294\pm0.802$ &
$11.111\pm0.296$ \\
Copy $k=3$ & $12.126\pm0.513$ & $7.916\pm0.498$ & $17.312\pm0.880$ &
$13.298\pm0.956$ \\
Copy $k=5$ & $15.137\pm0.588$ & $10.453\pm1.538$ & $17.255\pm1.176$ &
$14.693\pm0.670$ \\
Copy $k=10$ & $16.625\pm0.585$ & $12.224\pm1.376$ & $15.900\pm3.403$ &
$16.715\pm0.652$ \\
\bottomrule
\end{tabular}
\end{table}

\begin{table}[t]
\centering
\small
\setlength{\tabcolsep}{7pt}
\caption{Policy performance (normalized return in [0,1], mean
$\pm$ std). After hyperparameter tuning, all architectures achieve strong
performance, enabling meaningful TR analysis across models.}
\label{tab:performance_norm}
\begin{tabular}{lcccc}
\toprule
Environment  & LEM & GRU & LSTM & LinOSS \\
\midrule
CartPole & $0.989\pm0.002$ & $0.998\pm0.003$ & $0.960\pm0.029$ & $0.914\pm0.135$
\\
Stateless CartPole  & $0.892\pm0.186$ & $0.999\pm0.002$ & $0.926\pm0.065$ &
$0.796\pm0.320$ \\
Noisy Stateless CartPole& $0.534\pm0.010$ & $0.492\pm0.027$ & $0.408\pm0.022$ &
$0.469\pm0.063$ \\
RepeatFirst & $0.541\pm0.033$ & $0.356\pm0.132$ & $1.000\pm0.000$ &
$0.912\pm0.077$ \\
Copy $k=1$ & $0.975\pm0.014$ & $0.983\pm0.019$ & $0.435\pm0.069$ &
$0.850\pm0.251$ \\
Copy $k=3$ & $0.824\pm0.055$ & $0.949\pm0.032$ & $0.394\pm0.031$ &
$0.455\pm0.051$ \\
Copy $k=5$ & $0.455\pm0.033$ & $0.647\pm0.124$ & $0.325\pm0.013$ &
$0.389\pm0.012$ \\
Copy $k=10$ & $0.517\pm0.045$ & $0.465\pm0.009$ & $0.407\pm0.035$ &
$0.511\pm0.023$ \\
\bottomrule
\end{tabular}
\end{table}

\section{Discussion}
\label{sec:discussion}

\paragraph{What range captures.}
Temporal Range summarizes \emph{local, first-order} influence of the input
history on the final vector output via matrix-norm Jacobians. It answers “how
far back is the policy looking \emph{here}?” at a specific rollout point. In our
paired views, when the temporal profile concentrates near small lags
(Fig.~\ref{fig:prof-cartpole}), returns saturate with short windows
(Fig.~\ref{fig:abl-cartpole}); when profiles carry longer tails (e.g.,
Fig.~\ref{fig:prof-stateless}), performance recovers only once the window
exceeds the measured look-back (Fig.~\ref{fig:abl-stateless};
Table~\ref{tab:window_ablation_summary}).

\paragraph{Reading range with reward.}
Range is not reward. On near-Markov control, long tails (e.g., slow modes) can
be unnecessary yet harmless; under partial observability or noise, larger ranges
are often necessary but not sufficient. The alignment between ablations and
profiles (Figs.~\ref{fig:ablations}, \ref{fig:profiles}) and the aggregates
(Tables~\ref{tab:temporal_range_meanstd}, \ref{tab:performance_norm},
\ref{tab:window_ablation_summary}) provides a consistent behavioral cross-check:
short profiles go with early saturation, while longer-tailed profiles demand
larger windows before returns improve.

\paragraph{Practical guidance.}
Use $\hat{\rho}_T$ to audit memory use and choose the \emph{shortest sufficient}
context: (i) if return is high and $\hat{\rho}_T$ is large on a near-Markov
task, simplify the architecture or shorten context; (ii) if return is capped and
$\hat{\rho}_T$ is small on a partially observed task, increase memory capacity
or training horizon; (iii) if noise increases $\hat{\rho}_T$ without return
gains, you are smoothing more without extracting signal—revisit representation
or denoising.

\section{Conclusion}
We presented \emph{Temporal Range}, a first-order, model-agnostic measure of how
far back a trained policy effectively looks. By turning vector-output Jacobian
blocks into a temporal influence profile and summarizing by a magnitude-weighted
average lag, Temporal Range provides a single, interpretable number per sequence
and timestep. An axiomatic derivation for vector-output linear maps fixes the
form of both unnormalized and normalized variants and yields invariance to
uniform input and output rescaling. The metric is practical, taking one
reverse-mode pass per sequence, and, when the policy is non-differentiable or
black-box, a proxy LEM policy supplies reliable Jacobians.

Across POPGym diagnostics and control, Temporal Range is small in fully observed
control, scales with fixed task lags (Copy-$k$), and matches the minimum history
window required for near-optimal return as confirmed by window ablations. This
makes it useful for auditing memory dependence, comparing agents and
environments, and choosing the shortest sufficient context.

Limitations include locality (the measure is specific to the evaluated
rollout/time), dependence on preprocessing and norm choice, and the possibility
that slow modes inflate range without improving return. Future work includes
time-resolved profiles across decisions, per-output-component ranges (policy
vs.\ value), causal perturbation checks, and regularizers that penalize
unnecessary range to bias training toward simpler, shorter-context solutions.

\subsection*{Reproducibility Statement}
All code to reproduce our tables, figures, and ablations is available in an
anonymous repository:
\url{https://anonymous.4open.science/r/TemporalRange-26E4/README.md} . The repo
includes end-to-end training/evaluation scripts for PPO agents,
Jacobian/Temporal Range computation utilities, and the window-ablation driver.
Unless otherwise noted in the text, we use the hyperparameters listed in
Table~\ref{tab:hyperparams}. We provide fixed random seeds and configuration
files to regenerate the reported runs (curves use multiple seeds as indicated in
captions; summary tables average across episodes/trials as stated). The
repository also includes plotting code and exact evaluation commands to
reproduce
Tables~\ref{tab:temporal_range_meanstd}--\ref{tab:window_ablation_summary} and
Figs.~\ref{fig:ablations}--\ref{fig:noisy-all-profiles} from raw rollouts.

\subsection*{Acknowledgments} The research was sponsored in part by the Hector Foundation and by the Department of the
Air Force Artificial Intelligence Accelerator and was accomplished under
Cooperative Agreement Number FA8750-19-2-1000. The views and conclusions
contained in this document are those of the authors and should not be
interpreted as representing the official policies, either expressed or
implied, of the Department of the Air Force or the U.S. Government. The U.S.
Government is authorized to reproduce and distribute reprints for Government
purposes notwithstanding any copyright notation herein.

\bibliography{iclr2026_conference}
\bibliographystyle{iclr2026_conference}

\clearpage

\appendix
\section{Appendix}

\subsection{Window ablation results}
\label{app:ablations}

\begin{table}[h!]
\centering
\small
\caption{Window ablation summary (normalized to [0,1]). Best@m
shows peak performance and required window size; Avg. shows mean across all
windows.}
\label{tab:window_ablation_summary}
\begin{tabular*}{\textwidth}{@{\extracolsep{\fill}} l r r r r @{}}
\toprule
Environment & LEM & GRU & LSTM & LinOSS \\
\midrule
CartPole & 1.000@4 / 0.449 & 1.000@1 / 1.000 & 1.000@1 / 0.744 & 1.000@64 /
0.817 \\
Stateless CartPole & 0.829@8 / 0.327 & 0.961@8 / 0.335 & 0.501@16 / 0.215 &
1.000@16 / 0.512 \\
Noisy Stateless CartPole & 0.809@16 / 0.506 & 1.000@16 / 0.574 & 0.767@16 /
0.471 & 0.655@8 / 0.314 \\
RepeatFirst & 1.000@1 / 0.396 & 0.682@8 / 0.328 & 0.719@16 / 0.525 & 0.944@16 /
0.512 \\
Copy $k=1$ & 0.985@4 / 0.567 & 1.000@2 / 0.585 & 0.645@2 / 0.270 & 0.747@4 /
0.473 \\
Copy $k=3$ & 0.606@16 / 0.309 & 1.000@8 / 0.480 & 0.551@4 / 0.201 & 0.234@4 /
0.101 \\
Copy $k=5$ & 0.664@8 / 0.217 & 1.000@8 / 0.348 & 0.437@8 / 0.126 & 0.381@8 /
0.136 \\
Copy $k=10$ & 0.868@16 / 0.257 & 0.233@16 / 0.136 & 0.899@16 / 0.226 & 1.000@16
/ 0.276 \\
\bottomrule
\end{tabular*}
\end{table}

\subsection{Copy-$k$ calibration}

\begin{table}[h!]
\centering
\small
\setlength{\tabcolsep}{7pt}
\caption{Copy-$k$ calibration: MAE($\hat{\rho}_T$, $k$) over
episodes. Lower values indicate better alignment between measured TR and
ground-truth memory requirement $k$.}
\label{tab:copyk_mae}
\begin{tabular}{lcccc}
\toprule
Environment  & LEM & GRU & LSTM & LinOSS \\
\midrule
Copy $k=1$ & $9.65\pm0.42$ & $7.40\pm1.55$ & $11.29\pm0.80$ & $10.11\pm0.30$ \\
Copy $k=3$ & $9.13\pm0.51$ & $4.92\pm0.50$ & $14.31\pm0.88$ & $10.30\pm0.96$ \\
Copy $k=5$ & $10.14\pm0.59$ & $5.45\pm1.54$ & $12.25\pm1.18$ & $9.69\pm0.67$ \\
Copy $k=10$ & $6.63\pm0.58$ & $2.28\pm1.28$ & $5.91\pm3.38$ & $6.72\pm0.65$ \\
\bottomrule
\end{tabular}
\end{table}

\subsection{Deriving temporal range from axioms (vector-output linear maps)}
\label{app:axioms-uniqueness}

Inspired by recent axiomatic treatments of “range” in other domains
\citep{bamberger_measuring_2025}, we show that in the temporal setting a small
set of natural conditions uniquely fix both the unnormalized and normalized
forms. We first characterize the linear case to fix the form of any reasonable
``how-far-back'' score with vector outputs and matrix norms. Consider a
length-$T$ linear map
\begin{equation}
\label{eq:linear_map_vector}
L(z_1,\dots,z_T)\;=\;\sum_{t=1}^{T} B_t\,z_t\qquad(B_t\in\mathbb{R}^{c\times d}),
\end{equation}
where $z_t\in\mathbb{R}^d$ is the input at time $t$, and $\ell(t)=T{-}t$. We ask
for a score $\rho_T(L)\in(0,\infty)$ satisfying:

\begin{description}[leftmargin=*, itemsep=2pt, topsep=2pt]
\item[R1-u (single-step calibration with magnitude).] If $L(z)=B\,z_{T-k}$ with
any nonzero $B\in\mathbb{R}^{c\times d}$, then
$\rho_T(L)=\|B\|_{\text{mat}}\;k$.

\item[R2 (additivity over disjoint times).] If $L_1$ and $L_2$ depend on
disjoint time indices, then $\rho_T(L_1+L_2)=\rho_T(L_1)+\rho_T(L_2)$.

\item[R3 (absolute homogeneity).] For any $\alpha\in\mathbb{R}$, $\rho_T(\alpha
L)=|\alpha|\,\rho_T(L)$.
\end{description}

\begin{proposition}[Uniqueness of the unnormalized form for vector outputs]
\label{thm:unnormalized-unique-matrix}
Fix any matrix norm $\|\!\cdot\!\|_{\text{mat}}$ on $\mathbb{R}^{c\times d}$.
There is a unique nonnegative map $\rho_T$ on linear maps obeying
\textup{R1-u--R3}, namely
\begin{equation}
\label{eq:rho-unnorm-linear-matrix}
\rho_T(L)\;=\;\sum_{t=1}^{T} \|B_t\|_{\text{mat}}\,\ell(t).
\end{equation}
\end{proposition}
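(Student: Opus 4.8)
The plan is to prove existence and uniqueness separately, with uniqueness being the substantive direction. For existence I would simply verify that the candidate $\rho_T(L)=\sum_{t=1}^{T}\|B_t\|_{\text{mat}}\,\ell(t)$ satisfies each axiom. Axiom R1-u holds because a single-block map $L(z)=B\,z_{T-k}$ has $B_{T-k}=B$ and all other blocks zero, so the sum collapses to $\|B\|_{\text{mat}}\,\ell(T-k)=\|B\|_{\text{mat}}\,k$. Axiom R3 holds because $\|\alpha B_t\|_{\text{mat}}=|\alpha|\,\|B_t\|_{\text{mat}}$ by absolute homogeneity of any matrix norm, which factors out of the sum. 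Axiom R2 holds because disjoint time support means that for each index $t$ at most one of the two maps contributes a nonzero block, so the matrix norms add without interaction. These are one-line checks and I would not belabor them.

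For uniqueness, suppose $\rho_T$ is any nonnegative functional obeying R1-u through R3. The key observation is that R1-u already pins down the value on every map supported at a single time index, while R2 reduces an arbitrary map to such pieces. Concretely, I would decompose $L=\sum_{t=1}^{T}L^{(t)}$ where $L^{(t)}(z):=B_t z_t$ depends only on the index $t$, so the $T$ summands have pairwise disjoint time support. Applying R2 inductively — at each stage grouping the pieces already accumulated (supported on $\{1,\dots,t\}$) with the next piece (supported on $\{t+1\}$) — upgrades the stated binary additivity to full additivity, giving $\rho_T(L)=\sum_{t=1}^{T}\rho_T(L^{(t)})$, where $\rho_T(0)=0$ (established below) is used to absorb vanishing pieces. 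Then for each $t$ with $B_t\neq 0$, writing $t=T-k$ with lag $k=\ell(t)$, axiom R1-u yields $\rho_T(L^{(t)})=\|B_t\|_{\text{mat}}\,\ell(t)$, and summing recovers the claimed formula.

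Two edge cases require care and constitute the main (minor) obstacle. First, a vanishing block $B_t=0$ makes $L^{(t)}$ the zero map, which is not covered by R1-u; here I would invoke R3 with $\alpha=0$ to conclude $\rho_T(0)=|0|\,\rho_T(L')=0$ for any single-block reference map $L'$ (whose value is finite by R1-u), so these terms contribute $0$ and agree with $\|0\|_{\text{mat}}\,\ell(t)=0$. Second, the boundary index $t=T$ corresponds to lag $k=0$, the current input $z_T$; R1-u then forces $\rho_T(L^{(T)})=\|B_T\|_{\text{mat}}\cdot 0=0$, consistent with the stated nonnegativity even though it sits at the boundary of the $(0,\infty)$ range advertised informally for genuinely lagged maps. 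Handling these two cases cleanly — and confirming that the inductive extension of R2 never requires adding two maps with overlapping support — is where I would spend the writing effort; the arithmetic that follows is immediate.
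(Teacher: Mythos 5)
Your proof is correct and follows essentially the same route as the paper's: decompose $L$ into its single-time-index pieces $B_t L_t$, use R2 to reduce to those pieces, and calibrate each nonzero piece via R1-u (the paper inserts an R3 normalization step, $\rho_T(B_tL_t)=\|B_t\|_{\mathrm{mat}}\,\rho_T\bigl(\tfrac{B_t}{\|B_t\|_{\mathrm{mat}}}L_t\bigr)$, which your direct use of the magnitude-bearing R1-u makes unnecessary). Your added care — the existence check, the induction upgrading binary R2 to $T$-fold additivity, and the R3-with-$\alpha=0$ handling of zero blocks (which the paper's division by $\|B_t\|_{\mathrm{mat}}$ silently assumes away) — is a tightening of the same argument rather than a different approach.
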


\begin{proof}[Proof of Proposition~\ref{thm:unnormalized-unique-matrix}]
Let $L_t$ denote the projection $L_t(z)=z_t$. Then $L=\sum_{t=1}^{T} B_t L_t$,
and $\{B_t L_t\}$ depend on disjoint time indices. By R2 and R3,
\[
\rho_T(L)=\sum_{t=1}^{T} \rho_T(B_t L_t)
=\sum_{t=1}^{T} \|B_t\|_{\mathrm{mat}}\,
\rho_T\!\Big(\tfrac{B_t}{\|B_t\|_{\mathrm{mat}}}\,L_t\Big).
\]
By single-step calibration (R1-u),
$\rho_T(\tfrac{B_t}{\|B_t\|_{\mathrm{mat}}}L_t)=\ell(t)$, yielding
$\rho_T(L)=\sum_{t=1}^{T}\|B_t\|_{\mathrm{mat}}\,\ell(t)$. Uniqueness follows
because R1-u fixes single-index values and R2--R3 propagate to disjoint sums.
\end{proof}

For an average lag, replace additivity with magnitude-weighted averaging.

\begin{description}[leftmargin=*, itemsep=2pt, topsep=2pt]
\item[R4 (magnitude-weighted averaging).] If $L_1$ and $L_2$ depend on disjoint
time indices and are nonzero, then for any $\alpha,\beta\in\mathbb{R}$ with
$(\alpha,\beta)\neq(0,0)$ and $\alpha L_1+\beta L_2\neq 0$,
\[
\hat{\rho}_T(\alpha L_1+\beta L_2)
\;=\;\frac{|\alpha|\,\hat{\rho}_T(L_1)+|\beta|\,\hat{\rho}_T(L_2)}{|\alpha|+|\beta|}.
\]

\item[R1-n (single-step calibration without magnitude).] If $L(z)=B\,z_{T-k}$
with any nonzero $B\in\mathbb{R}^{c\times d}$, then $\hat{\rho}_T(L)=k$.
\end{description}

\begin{proposition}[Uniqueness of the normalized form for vector outputs]
\label{thm:normalized-unique-matrix}
Fix a matrix norm $\|\!\cdot\!\|_{\text{mat}}$ on $\mathbb{R}^{c\times d}$. On
the domain of nonzero linear maps $L$ of the form \eqref{eq:linear_map_vector},
there is a unique map $\hat{\rho}_T$ obeying R1-n and R4, namely
\begin{equation}
\label{eq:rho-hat-linear-matrix}
\hat{\rho}_T(L)\;=\;
\frac{\sum_{t=1}^{T} \|B_t\|_{\text{mat}}\,\ell(t)}{\sum_{t=1}^{T} \|B_t\|_{\text{mat}}}.
\end{equation}
\end{proposition}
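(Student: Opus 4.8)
The plan is to mirror the unnormalized argument of Proposition~\ref{thm:unnormalized-unique-matrix}, but with the magnitude-weighted averaging axiom R4 playing the role that additivity (R2) and homogeneity (R3) played there. I would again decompose $L=\sum_{t=1}^{T} B_t z_t$ into its single-index pieces $L_t(z)=B_t z_t$ over the active set $T_0=\{t:B_t\neq 0\}$, each of which has the form $B\,z_{T-k}$ with $k=\ell(t)=T-t$. By R1-n every such piece satisfies $\hat{\rho}_T(L_t)=\ell(t)$ for any nonzero $B_t$, so the building-block values are pinned down independently of $\|B_t\|_{\text{mat}}$. The target formula should then emerge by reassembling the pieces via R4 and inducting on $|T_0|$, with the base case $|T_0|=1$ being exactly R1-n: there $\hat{\rho}_T(L)=\ell(t^\star)=\|B_{t^\star}\|_{\text{mat}}\ell(t^\star)/\|B_{t^\star}\|_{\text{mat}}$.

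The main obstacle, and the step I would handle first, is that R4 weights its two summands by their \emph{scalar multipliers} $|\alpha|,|\beta|$, not by their matrix-norm magnitudes. Consequently the naive induction---grouping the first $m$ blocks into $L'$ and writing $L=1\cdot L'+1\cdot L_{t_{m+1}}$---applies R4 with equal weights and produces the wrong, unweighted average. To repair this I would first establish a scale-invariance lemma: $\hat{\rho}_T(cL)=\hat{\rho}_T(L)$ for every $c\neq 0$ and nonzero $L$ in the domain. For single-index $L$ this is immediate from R1-n, since $cB$ is still nonzero; for any $L$ with at least two active indices, splitting the active set into two disjoint nonzero parts $L=L_1+L_2$ and applying R4 to $cL=cL_1+cL_2$ versus $L=L_1+L_2$ gives $\hat{\rho}_T(cL)=\tfrac12(\hat{\rho}_T(L_1)+\hat{\rho}_T(L_2))=\hat{\rho}_T(L)$, as the $|c|$ factors cancel.

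With scale invariance in hand the induction closes cleanly. At step $m+1$ I would write $L=L'+L_{t_{m+1}}$ with $L'$ carrying the first $m$ active blocks, set $S'=\sum_{i\le m}\|B_{t_i}\|_{\text{mat}}$ and $s=\|B_{t_{m+1}}\|_{\text{mat}}$, and present $L$ as $S'\,(L'/S')+s\,(L_{t_{m+1}}/s)$ so that the R4 weights become the true total magnitudes $S'$ and $s$. Applying R4, using scale invariance to replace $\hat{\rho}_T(L'/S')$ by $\hat{\rho}_T(L')$ and $\hat{\rho}_T(L_{t_{m+1}}/s)$ by $\ell(t_{m+1})$, and substituting the inductive hypothesis $\hat{\rho}_T(L')=\bigl(\sum_{i\le m}\|B_{t_i}\|_{\text{mat}}\ell(t_i)\bigr)/S'$, the $S'$ factors cancel and I recover $\hat{\rho}_T(L)=\bigl(\sum_{i\le m+1}\|B_{t_i}\|_{\text{mat}}\ell(t_i)\bigr)/\bigl(\sum_{i\le m+1}\|B_{t_i}\|_{\text{mat}}\bigr)$; inactive indices contribute nothing, giving \eqref{eq:rho-hat-linear-matrix}. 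Because this chain of deductions forces the value of $\hat{\rho}_T(L)$ from R1-n and R4 alone, uniqueness is immediate; I would close with the routine verification that the formula in \eqref{eq:rho-hat-linear-matrix} does satisfy R1-n and R4, which establishes existence.
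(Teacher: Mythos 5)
You correctly identified the crux that the paper's own proof silently elides: R4 weights its two summands by the scalar coefficients $|\alpha|,|\beta|$, not by matrix-norm magnitudes, so some bridge is needed to reach the norm-weighted form \eqref{eq:rho-hat-linear-matrix}. Your scale-invariance lemma and the renormalized presentation $L=S'\,(L'/S')+s\,(L_{t_{m+1}}/s)$ are a legitimate bridge, and each step in your forced-value chain is a valid application of the axioms. The gap is at the end: the ``routine verification'' of existence fails, for a reason your own argument exposes. Inside your scale-invariance lemma you apply R4 with $(\alpha,\beta)=(1,1)$; combined with R1-n this forces, for any two-block map $L(z)=B_1 z_{t_1}+B_2 z_{t_2}$ with $B_1,B_2\neq 0$ and $t_1\neq t_2$,
\[
\hat{\rho}_T(L)=\tfrac12\bigl(\ell(t_1)+\ell(t_2)\bigr),
\]
the \emph{unweighted} average, while your induction forces
\[
\hat{\rho}_T(L)=\frac{\|B_1\|_{\text{mat}}\,\ell(t_1)+\|B_2\|_{\text{mat}}\,\ell(t_2)}{\|B_1\|_{\text{mat}}+\|B_2\|_{\text{mat}}}.
\]
Whenever $\|B_1\|_{\text{mat}}\neq\|B_2\|_{\text{mat}}$ these disagree, so R1-n and R4, read literally, are jointly inconsistent: no map satisfies them, your uniqueness conclusion holds only vacuously, and the formula \eqref{eq:rho-hat-linear-matrix} itself violates R4 (the same $(\alpha,\beta)=(1,1)$ instance is a counterexample). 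A proof that closes by verifying existence therefore cannot be completed as written.

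For comparison, the paper's proof is a one-liner that avoids this contradiction only by reading R4 as if ``magnitude-weighted'' meant weighting by the summands' \emph{total matrix-norm magnitudes} (equivalently, invoking R4 only for summands normalized to unit total magnitude --- which is exactly your presentation $L=S'\,(L'/S')+s\,(L_{t_{m+1}}/s)$). Under that corrected reading of the axiom, the formula \eqref{eq:rho-hat-linear-matrix} genuinely satisfies R4, your scale-invariance lemma still goes through (rescale both unit-magnitude pieces by $c$), and your induction over the active set becomes a complete, rigorous proof --- considerably more careful than the paper's, which never confronts the coefficient-versus-norm mismatch at all. So the constructive fix is not to change your argument but to restate the axiom: restrict R4 to unit-magnitude summands, or replace $|\alpha|,|\beta|$ by the total magnitudes of $\alpha L_1$ and $\beta L_2$. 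With that emendation your proof stands; without it, neither your proof nor the paper's is sound, because the proposition's existence claim is false as stated.
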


\begin{proof}[Proof of Proposition~\ref{thm:normalized-unique-matrix}]
By R4, for linear maps on disjoint time indices the normalized score must be a
magnitude-weighted average; with vector outputs the magnitudes are the matrix
norms $\|B_t\|_{\mathrm{mat}}$. Thus
\[
\hat{\rho}_T(L)=
\frac{\sum_{t=1}^{T}\|B_t\|_{\mathrm{mat}}\,\ell(t)}{\sum_{t=1}^{T}\|B_t\|_{\mathrm{mat}}}.
\]
On the domain of nonzero maps $L$, we have
$\sum_{t=1}^{T}\|B_t\|_{\mathrm{mat}}>0$, so the denominator is strictly
positive. R1-n calibrates single-step maps to $\ell(t)$, which uniquely fixes
the form.
\end{proof}

\paragraph{Takeaway.} On vector-output linear maps, the unnormalized sum of
matrix-norm–weighted lags and its normalized average are uniquely determined by
minimal, natural rules. This justifies using
\eqref{eq:unnormalized_range}--\eqref{eq:normalized_range} with
$w_t=\frac{1}{T-t}\sum_{s=t+1}^{T}\|J_{s,t}\|_{\text{mat}}$ as
canonical summaries of temporal influence for nonlinear policies via local
linearization.

\subsection{Analytical calibration of Temporal Range}
\label{sec:analyticalTR}
We compute Temporal Range exactly in two differentiable settings where all
Jacobians are available in closed form, now with \emph{vector} outputs and
matrix norms.

\subsubsection{Exact range in Copy-$k$}
\label{sec:copyk-analytic}
Fix a sequence length $T$ and inputs $X_{1:T} = [x_1,\ldots,x_T]$,
$x_t\in\mathbb{R}^d$. Consider the map $F:\mathbb{R}^{T\times d}\to\mathbb{R}^c$
that \emph{copies} a linear readout of the observation from $k$ steps ago at the
final time:
\[
y_T(X) \;=\; U\,x_{T-k}, \qquad U\in\mathbb{R}^{c\times d},\quad k\in\{0,\ldots,T{-}1\}.
\]
In the multi-output formulation, outputs at earlier steps $s<T$
are zero, so $J_{s,t}(X)=0$ for all $s<T$ and all $t$. The only nonzero Jacobian
is the Jacobian block of $y_T$ with respect to $x_t$:
\[
J_{T,t}(X) \;=\; \frac{\partial y_T}{\partial x_t}(X) \;=\;
\begin{cases}
U, & t = T-k,\\
0, & \text{otherwise.}
\end{cases}
\]
Thus averaging over $s\in\{t+1,\ldots,T\}$ yields
$w_t(X)=\frac{1}{T-t}\|U\|_{\text{mat}}\,\mathbf{1}\{t=T-k\}$ for $t<T$; the
normalization factor $(T-t)^{-1}$ cancels in $\hat{\rho}_T$. With the lag
$\ell(t)=T-t$, the unnormalized and normalized ranges are
\[
\rho_T(F;X) \;=\; \sum_{t=1}^T w_t\,\ell(t) \;=\; \|U\|_{\text{mat}}\cdot k,
\qquad
\hat\rho_T(F;X) \;=\; \frac{\sum_t w_t\,\ell(t)}{\sum_t w_t}
\;=\; k.
\]
Thus \textbf{$\hat\rho_T=k$ exactly}, matching the ground-truth offset.

\paragraph{Invariances in this setting.}
If we apply uniform output rescaling $\tilde y_T=\alpha y_T$, then all $w_t$
scale by $|\alpha|$, leaving $\hat\rho_T$ unchanged. If we uniformly rescale
inputs $x_t^\star=\beta x_t$ with $\beta\neq 0$, then $\|{\partial
y_T}/{\partial x_t^\star}\|_{\text{mat}} =\|{\partial y_T}/{\partial
x_t}\|_{\text{mat}}/|\beta|$ for all $t$, again leaving $\hat\rho_T$ unchanged
(while $\rho_T$ rescales).

\subsubsection{Exact range under linear recurrent readout}
\label{sec:linear-analytic}
Consider a linear recurrent “memory”
\[
h_{t+1} \;=\; A h_t + C x_{t+1},\qquad h_0=0,
\]
and define the vector output at the final time by a linear readout $y_T \;=\;
Q\,h_T$ with $Q\in\mathbb{R}^{c\times p}$. (For simplicity we
present the single final-output case; if outputs are emitted at each step $y_s=Q
h_s$, then $w_t$ becomes the average of $\|Q A^{s-t-1} C\|_{\text{mat}}$ over
$s\in\{t+1,\ldots,T\}$, exhibiting similar exponential decay.) Unrolling,
\[
h_T \;=\; \sum_{t=1}^{T} A^{T-t} C\,x_t
\quad\Longrightarrow\quad
\frac{\partial y_T}{\partial x_t} \;=\; Q\,A^{T-t} C\;\in\;\mathbb{R}^{c\times d}.
\]
For any matrix norm $\|\!\cdot\!\|_{\text{mat}}$ on $\mathbb{R}^{c\times d}$,
the per-step weight is
\[
w_t \;=\; \big\| Q\,A^{T-t} C \big\|_{\text{mat}}
\;=\; \big\| Q\,A^{\ell(t)} C \big\|_{\text{mat}}.
\]
Hence the influence profile $t\mapsto w_t$ is governed by powers of $A$.

\paragraph{Takeaway.}
In a linear recurrent readout, $w_t$ is determined by the propagator powers
$A^{\ell}$, and $\hat\rho_T$ becomes the expected lag under those induced
weights. This bridges spectral properties of the memory dynamics and the
measured temporal range.

\subsubsection{From exact settings to complex simulators}
The two cases above show that (i) when dependence is concentrated at a single
offset (\textsc{Copy-$k$}), the normalized range recovers $k$ exactly; and (ii)
when dependence is distributed and exponentially decaying (linear recurrence),
$\hat\rho_T$ has a closed form that grows with the effective memory timescale.
In complex environments, the same definitions apply; we compute Jacobians with
respect to \emph{observations} by differentiating the policy. When the policy is
not amenable to automatic differentiation or is only available as a black box,
we compute the same $w_t$ via reverse mode on a compact LEM proxy (see
\S\ref{sec:lem}); in practice the resulting $\hat{\rho}_T$ tracks ground-truth
diagnostics and performance-relevant memory in control tasks.

\subsection{Use of large language models}
We used large language model (LLM) assistants for writing support: reorganizing
sentences and paragraphs for clarity, tightening prose, fixing grammar and LaTeX
formatting, and suggesting alternative phrasings.

\subsection{Comparison of aggregation operators}
\label{app:max-aggregation}

We investigated using $\bigoplus = \max$ instead of $\bigoplus = \text{mean}$ for the influence weight computation. Table~\ref{tab:temporal_range_max} shows TR values with max aggregation.

\begin{table}[H]
\centering
\small
\setlength{\tabcolsep}{5pt}
\caption{Temporal Range $\hat{\rho}_T$ using $\bigoplus = \max$ aggregation (steps; mean $\pm$ std over episodes). Values cluster around 15--18 for all tasks, reducing discriminative power compared to mean aggregation (Table~\ref{tab:temporal_range_meanstd}).}
\label{tab:temporal_range_max}
\begin{tabular}{lcccc}
\toprule
Environment  & LEM & GRU & LSTM & LinOSS \\
\midrule
CartPole & $15.97\pm0.24$ & $16.12\pm0.21$ & $16.37\pm0.57$ & $16.07\pm0.32$ \\
Stateless CartPole & $15.96\pm0.39$ & $16.14\pm0.44$ & $17.11\pm1.66$ & $15.99\pm0.78$ \\
Noisy Stateless CartPole & $15.71\pm0.59$ & $15.67\pm0.59$ & $17.78\pm1.48$ & $15.91\pm0.44$ \\
RepeatFirst & $19.52\pm1.79$ & $16.88\pm1.01$ & $16.72\pm1.56$ & $21.85\pm0.98$ \\
Copy $k=1$ & $16.00\pm0.22$ & $15.96\pm0.25$ & $15.47\pm0.31$ & $15.46\pm0.29$ \\
Copy $k=3$ & $15.94\pm0.20$ & $15.94\pm0.22$ & $18.94\pm1.19$ & $15.37\pm0.45$ \\
Copy $k=5$ & $16.70\pm0.43$ & $16.03\pm0.26$ & $18.82\pm1.40$ & $15.77\pm0.49$ \\
Copy $k=10$ & $17.93\pm0.71$ & $17.35\pm0.88$ & $18.06\pm2.62$ & $17.42\pm0.83$ \\
\bottomrule
\end{tabular}
\end{table}

Using max causes TR values to cluster, reducing discrimination between tasks. For example, Copy $k=1$ yields TR $\approx$ 15--16, nearly identical to Copy $k=10$ at $\approx$ 17--18. With mean aggregation (Table~\ref{tab:temporal_range_meanstd}), Copy $k=1$ produces TR $\approx$ 8--12 while Copy $k=10$ produces TR $\approx$ 12--17, correctly reflecting increased memory demands. The max operator is dominated by the single largest Jacobian norm, which tends to be similar across tasks. Mean provides a more robust measure of \emph{sustained} temporal influence.

\subsection{Additional temporal influence profiles}
\label{app:more-figs}

\begin{figure}[H]
    \centering
    \begin{subfigure}[b]{0.48\textwidth}
        \includegraphics[width=\textwidth]{plots/RepeatPrevious3_lem_temporal_profile.png}
        \caption{LEM, Copy $k=3$}
    \end{subfigure}\hfill
    \begin{subfigure}[b]{0.48\textwidth}
        \includegraphics[width=\textwidth]{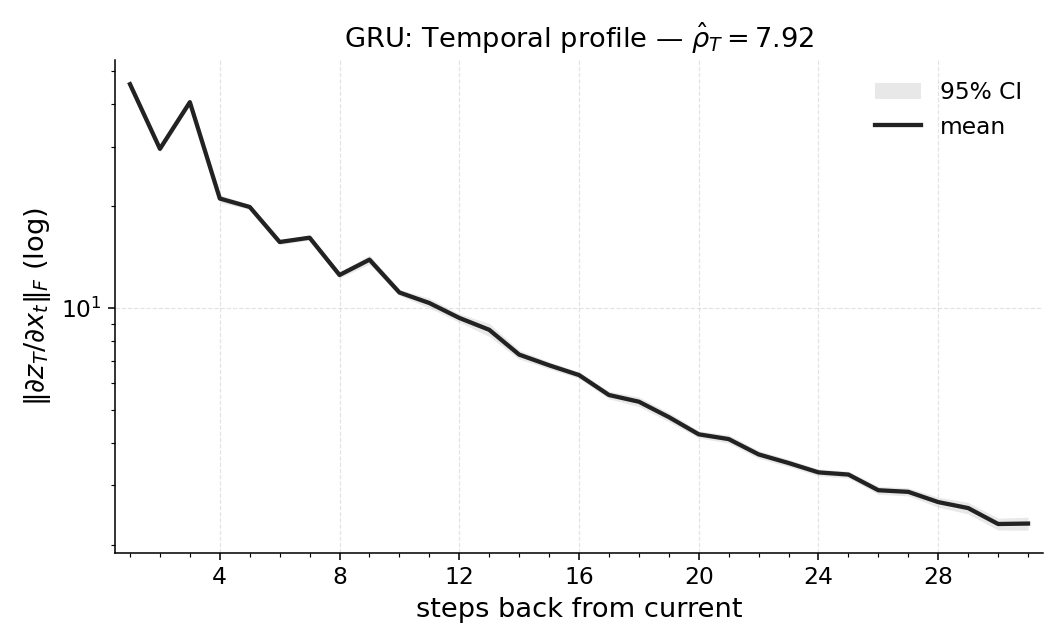}
        \caption{GRU, Copy $k=3$}
    \end{subfigure}

    \medskip

    \begin{subfigure}[b]{0.48\textwidth}
        \includegraphics[width=\textwidth]{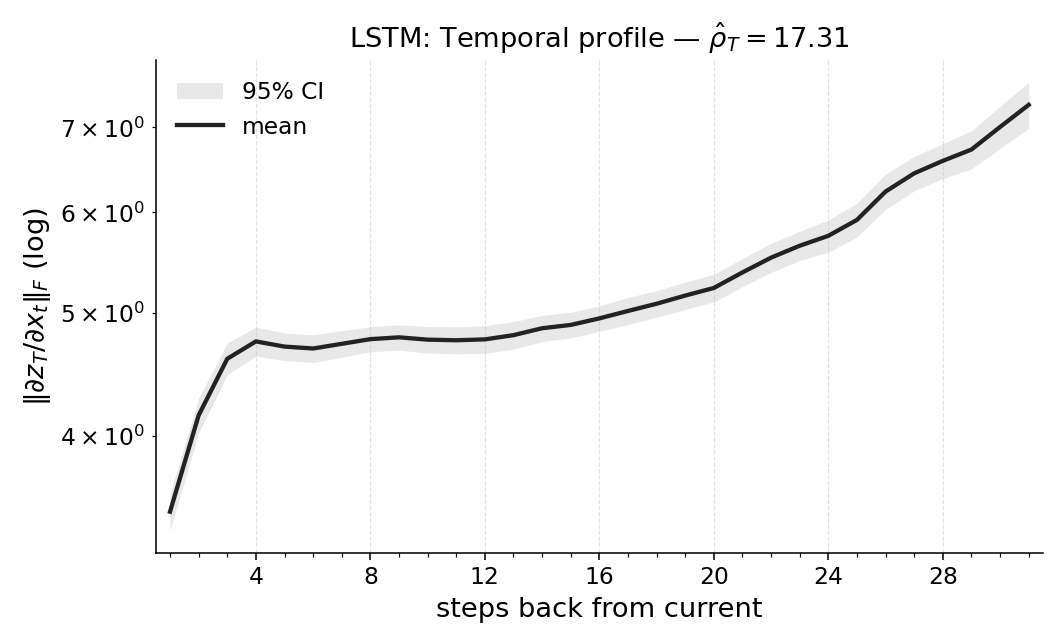}
        \caption{LSTM, Copy $k=3$}
    \end{subfigure}\hfill
    \begin{subfigure}[b]{0.48\textwidth}
        \includegraphics[width=\textwidth]{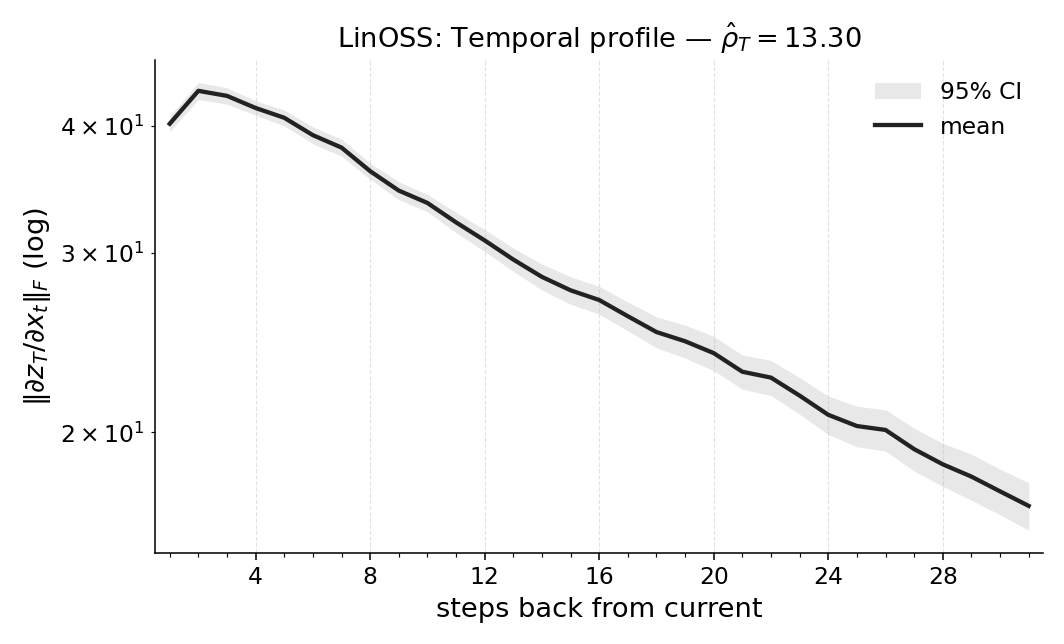}
        \caption{LinOSS, Copy $k=3$}
    \end{subfigure}
    \caption{Temporal influence profiles for Copy $k=3$ across
    all architectures (LEM, GRU, LSTM, LinOSS).}
    \label{fig:copyk3-all-profiles}
\end{figure}

\begin{figure}[H]
    \centering
    \begin{subfigure}[b]{0.48\textwidth}
        \includegraphics[width=\textwidth]{plots/RepeatPrevious10_lem_temporal_profile.png}
        \caption{LEM, Copy $k=10$}
    \end{subfigure}\hfill
    \begin{subfigure}[b]{0.48\textwidth}
        \includegraphics[width=\textwidth]{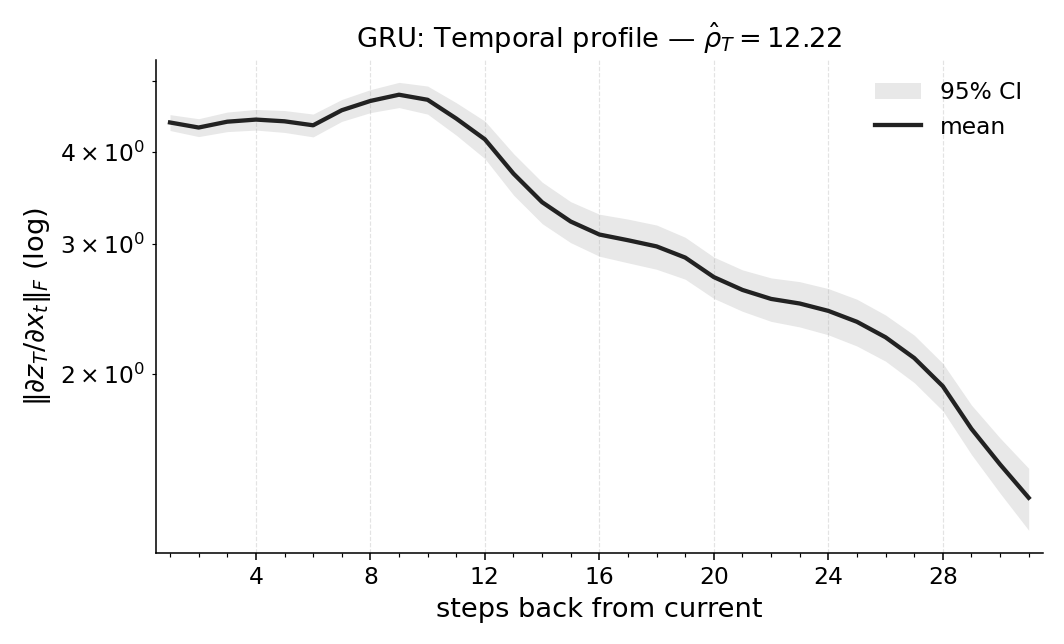}
        \caption{GRU, Copy $k=10$}
    \end{subfigure}

    \medskip

    \begin{subfigure}[b]{0.48\textwidth}
        \includegraphics[width=\textwidth]{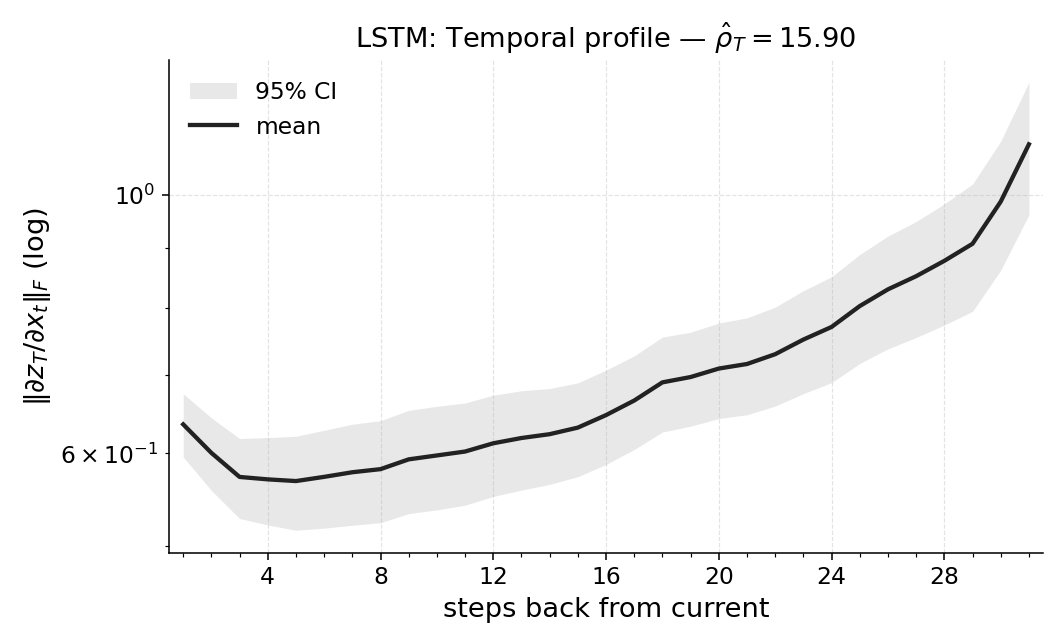}
        \caption{LSTM, Copy $k=10$}
    \end{subfigure}\hfill
    \begin{subfigure}[b]{0.48\textwidth}
        \includegraphics[width=\textwidth]{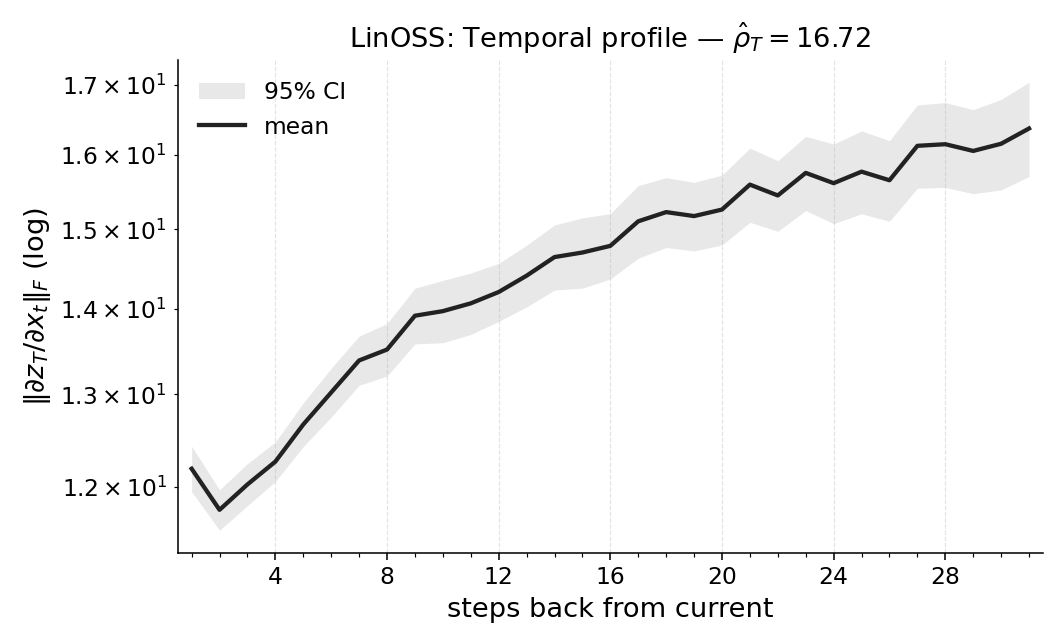}
        \caption{LinOSS, Copy $k=10$}
    \end{subfigure}
    \caption{Temporal influence profiles for Copy $k=10$ across
    all architectures (LEM, GRU, LSTM, LinOSS).}
    \label{fig:copyk10-all-profiles}
\end{figure}

\begin{figure}[H]
    \centering
    \begin{subfigure}[b]{0.48\textwidth}
        \includegraphics[width=\textwidth]{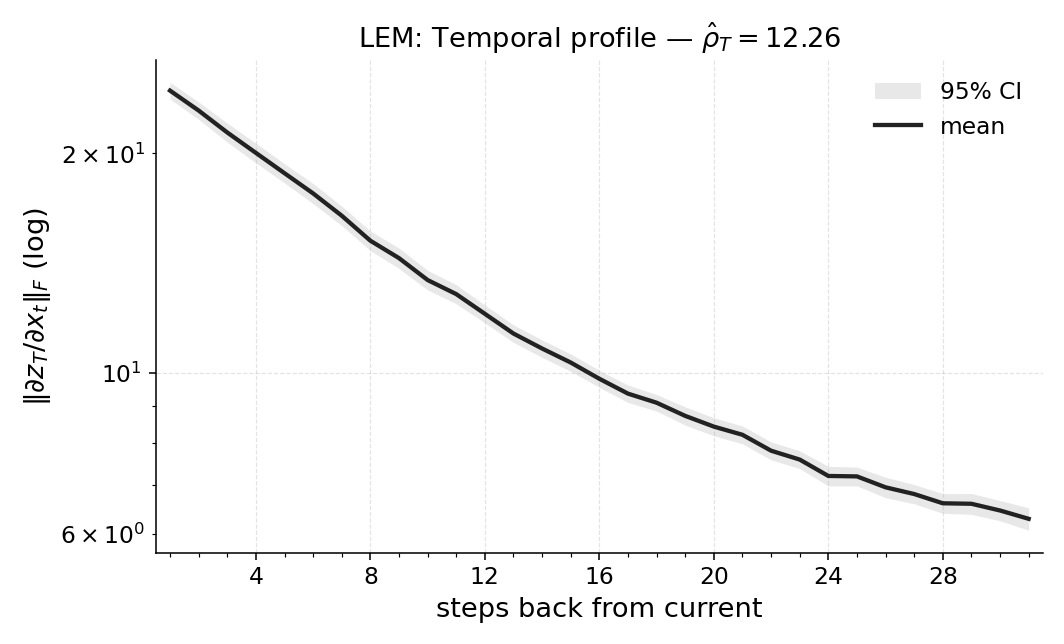}
        \caption{LEM, Noisy Stateless CartPole}
    \end{subfigure}\hfill
    \begin{subfigure}[b]{0.48\textwidth}
        \includegraphics[width=\textwidth]{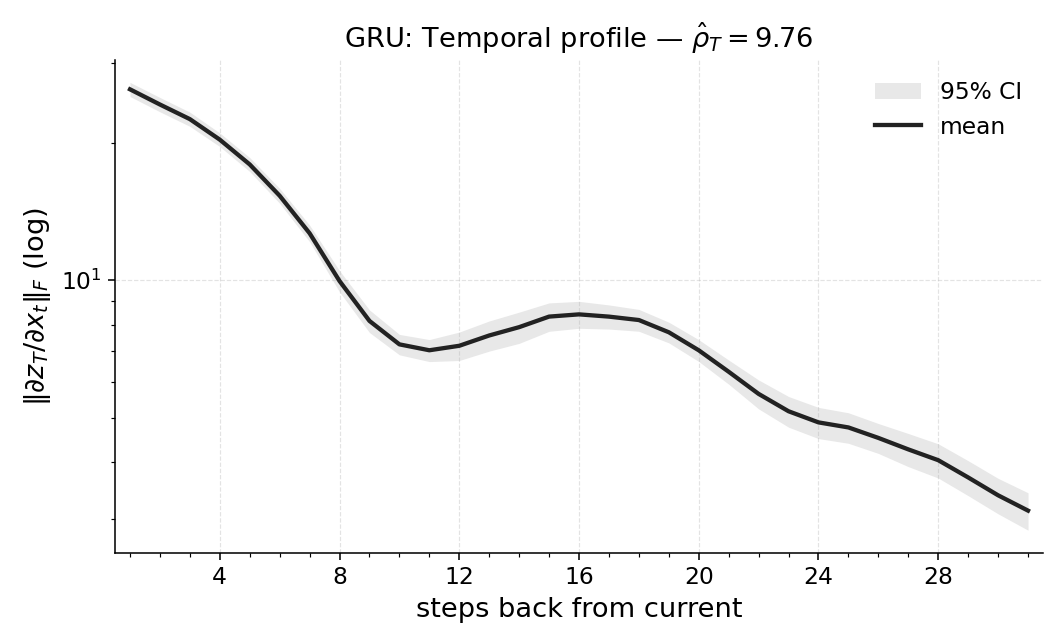}
        \caption{GRU, Noisy Stateless CartPole}
    \end{subfigure}

    \medskip

    \begin{subfigure}[b]{0.48\textwidth}
        \includegraphics[width=\textwidth]{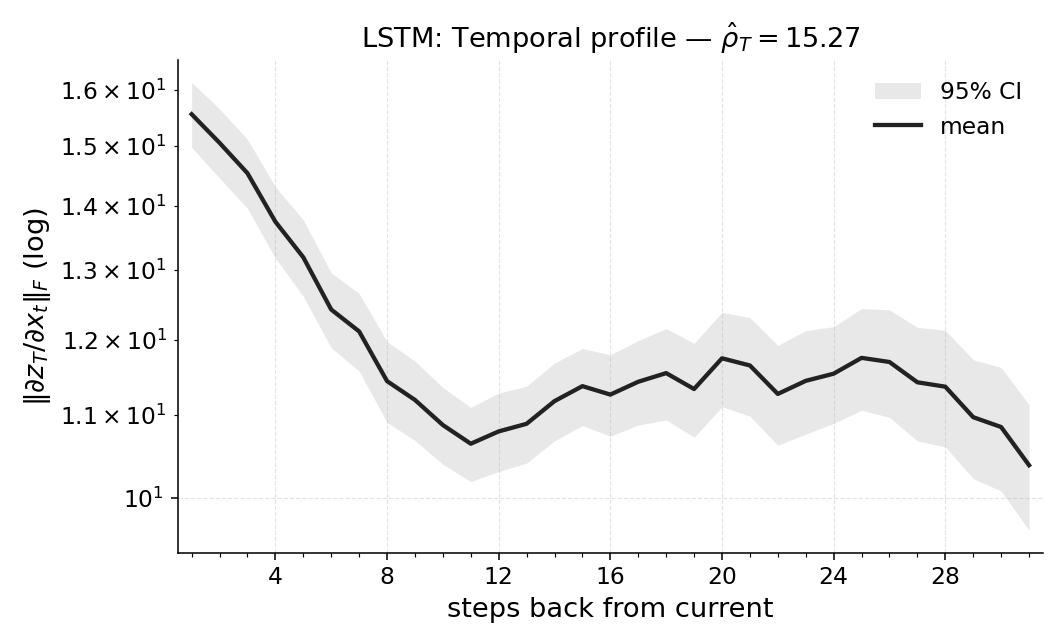}
        \caption{LSTM, Noisy Stateless CartPole}
    \end{subfigure}\hfill
    \begin{subfigure}[b]{0.48\textwidth}
        \includegraphics[width=\textwidth]{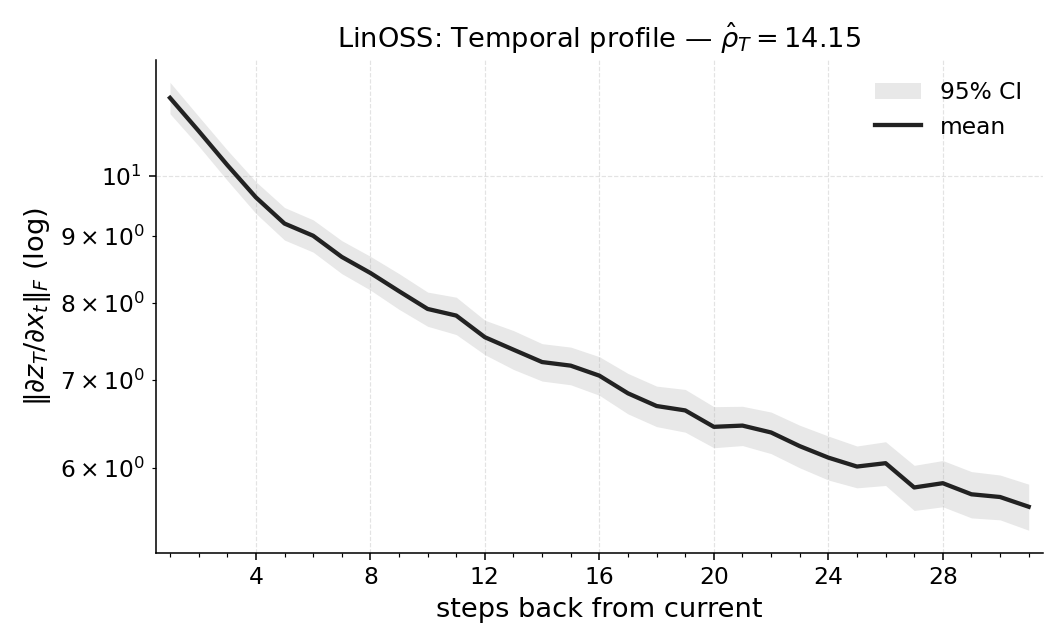}
        \caption{LinOSS, Noisy Stateless CartPole}
    \end{subfigure}
    \caption{Temporal influence profiles for Noisy Stateless
    CartPole across all architectures (LEM, GRU, LSTM, LinOSS).}
    \label{fig:noisy-all-profiles}
\end{figure}

\subsection{Hyperparameters}
\label{app:hyperparams}

See Table \ref{tab:hyperparams}.

\begin{table}[h]
\centering
\small
\caption{Training and evaluation hyperparameters used in all experiments.}
\label{tab:hyperparams}
\begin{tabular}{ll}
\toprule
\textbf{Hyperparameter} & \textbf{Value} \\
\midrule
\texttt{LR} & \texttt{3e-4} \\
\texttt{KL\_COEF} & \texttt{0.0} \\
\texttt{KL\_TARGET} & \texttt{0.01} \\
\texttt{NUM\_ENVS} & \texttt{64} \\
\texttt{NUM\_STEPS} & \texttt{256} \\
\texttt{TOTAL\_TIMESTEPS} & \texttt{1e7} \\
\texttt{UPDATE\_EPOCHS} & \texttt{8} \\
\texttt{NUM\_MINIBATCHES} & \texttt{4} \\
\texttt{GAMMA} & \texttt{0.99} \\
\texttt{GAE\_LAMBDA} & \texttt{0.95} \\
\texttt{CLIP\_EPS} & \texttt{0.2} \\
\texttt{ENT\_COEF} & \texttt{0.01} \\
\texttt{VF\_COEF} & \texttt{0.5} \\
\texttt{MAX\_GRAD\_NORM} & \texttt{0.5} \\
\texttt{HIDDEN\_SIZE} & \texttt{128} \\
\texttt{DENSE\_SIZE} & \texttt{128} \\
\texttt{GRU\_HIDDEN\_SIZE} & \texttt{128} \\
\texttt{LEM\_DT} & \texttt{0.5} \\
\texttt{ANNEAL\_LR} & \texttt{True} \\
\texttt{NUM\_TRIALS} & \texttt{3} \\
\texttt{T\_JACOBIAN} & \texttt{32} \\
\bottomrule
\end{tabular}
\end{table}

\end{document}